\documentclass{article}

\usepackage{microtype}
\usepackage{graphicx}
\usepackage{subcaption}
\usepackage{booktabs} 

\usepackage{hyperref}

\usepackage[preprint]{icml2026}

\usepackage{amsmath}
\usepackage{amssymb}
\usepackage{mathtools}
\usepackage{amsthm}
\usepackage{enumitem}
\usepackage{float}

\usepackage[capitalize,noabbrev]{cleveref}

\theoremstyle{plain}
\newtheorem{theorem}{Theorem}[section]

\newtheorem{lemma}[theorem]{Lemma}

\theoremstyle{definition}
\newtheorem{definition}[theorem]{Definition}

\theoremstyle{remark}

\usepackage[textsize=tiny]{todonotes}

\usepackage{cuted}
\usepackage{caption} 

\icmltitlerunning{Unraveling the Hidden Dynamical Structure in Recurrent Neural Policies}

\begin{document}

\twocolumn[
  \icmltitle{Unraveling the Hidden Dynamical Structure in Recurrent Neural Policies}



  \icmlsetsymbol{equal}{*}

  \begin{icmlauthorlist}
    \icmlauthor{Jin Li}{equal,sapient,thu}
    \icmlauthor{Yue Wu}{equal,thu}
    \icmlauthor{Mengsha Huang}{thu}
    \icmlauthor{Yuhao Sun}{sapient}
    \icmlauthor{Hao He}{thu}
    \icmlauthor{Xianyuan Zhan}{thu}
  \end{icmlauthorlist}

  \icmlaffiliation{sapient}{Sapient Intelligence}
  \icmlaffiliation{thu}{Tsinghua University}

  \icmlcorrespondingauthor{Jin Li}{electrixoul@outlook.com \& jin@sapient.inc}
  \icmlcorrespondingauthor{Xianyuan Zhan}{zhanxianyuan@gmail.com}

  \icmlkeywords{Machine Learning, ICML}

  \vskip 0.3in
]



\printAffiliationsAndNotice{\icmlEqualContribution}

\newif\ifshowsuggest
\showsuggesttrue 

\newcommand{\syh}[1]{
    \ifshowsuggest
        \textcolor{red}{[SYH: #1]}
    \fi
}

\newcommand{\yue}[1]{
    \ifshowsuggest
        \textcolor{cyan}{[Yue: #1]}
    \fi
}

\begin{abstract}
Recurrent neural policies are widely used in partially observable control and meta-RL tasks. Their abilities to maintain internal memory and adapt quickly to unseen scenarios have offered them unparalleled performance when compared to non-recurrent counterparts. However, until today, the underlying mechanisms for their superior generalization and robustness performance remain poorly understood. In this study, by analyzing the hidden state domain of recurrent policies learned over a diverse set of training methods, model architectures, and tasks, we find that stable cyclic structures consistently emerge during interaction with the environment. Such cyclic structures share a remarkable similarity with \textit{limit cycles} in dynamical system analysis, if we consider the policy and the environment as a joint hybrid dynamical system. Moreover, we uncover that the geometry of such limit cycles also has a structured correspondence with the policies' behaviors. These findings offer new perspectives to explain many nice properties of recurrent policies: the emergence of limit cycles stabilizes both the policies’ internal memory and the task-relevant environmental states, while suppressing nuisance variability arising from environmental uncertainty; the geometry of limit cycles also encodes relational structures of behaviors, facilitating easier skill adaptation when facing non-stationary environments.

\end{abstract}


\section{Introduction}

Acting under partial observability and non-stationary environments often requires policies to integrate and maintain historical information in order to adapt within a task and generalize to new environments \citep{Kaelbling1998Planning,Hausknecht2015DRQN,Duan2016RL2,Wang2016LearningToRL}.
Recurrent policies are well suited for these settings: they maintain internal hidden states that integrate past observations to support decision-making \citep{Elman1990Finding,Hochreiter1997LSTM,Hausknecht2015DRQN}.
Despite their ubiquity and empirical success, the underlying properties of recurrent policies and the reasons for their superior performance 
remain poorly understood \citep{PuiuttaVeith2020,Heuillet2021}.
Their decision-making processes are obscured by the complexity of the agent-environment interaction loop~\citep{Kaelbling1998Planning,Hausknecht2015DRQN}, the evolution of hidden state patterns~\citep{banino2018vector,jensen2024recurrent}, as well as diverse recurrent model architectures and policy learning methods~\citep{Elman1990Finding,Hochreiter1997LSTM,Cho2014Learning,Schulman2017PPO,Salimans2017Evolution}, developing a unified mechanistic understanding of recurrent neural policies has proven to be difficult. 


Until today, suitable explanatory analysis tools remain lacking for current neural policies. 
Methods such as feature attributions or saliency maps are typically designed to explain short-horizon, step-level effects, primarily used for rationalizing individual actions rather than coherent strategies~\citep{PuiuttaVeith2020,Heuillet2021,Simonyan2013DeepInside,Sundararajan2017Axiomatic,Greydanus2018Visualizing,Zahavy2016Graying}.
Conversely, while dynamical analyses can reveal meaningful low-dimensional structures in recurrent networks \citep{SussilloBarak2013}, such analyses are rarely tied to policy execution in closed-loop environment interactions \citep{milani2024explainable}.

In this study, we find that surprisingly universal cyclic dynamical structures consistently emerge in the hidden state domain of fully optimized recurrent policies during their interaction with the environment. Such patterns hold across different task families,
training pipelines (gradient-based policy optimization~\citep{Schulman2017PPO} and evolution strategies~\citep{Salimans2017Evolution}), and recurrent model architectures (conventional RNN structure and modern state space models like Mamba~\citep{Gu2023Mamba}). 
Importantly, these cyclic patterns emerge in the policy's hidden state even when memory is preserved across episode boundaries, indicating that they are a property of the learned dynamics rather than an artifact of state re-initialization.

We further find that such cyclic structures can be well captured if viewing the agent execution as evolution in the state space of a hybrid dynamical system (HDS) whose joint state includes both the policy's hidden neural states and environment states, with episode resets acting as discrete events \citep{Goebel2012Hybrid,Branicky1998Hybrid,Lygeros2003DynamicalProperties}.
In this view, the observed cyclic structures correspond to attracting \textit{limit cycles} in the joint state space of HDS~\citep{Strogatz2015Nonlinear,GuckenheimerHolmes1983Nonlinear}.
These cycles serve as basic elements for orchestrating decisions over time: they stabilize the coupled evolution of policy's memory and task-relevant environment state, and are robust to external perturbations, effectively suppressing nuisance variation from unknown environments.



Beyond the existence of limit cycles, we also uncover a structured correspondence between limit cycles and policy behaviors.
Different limit cycles correspond to distinct behavioral regimes, and the relationships among behaviors are mirrored by the geometry of the corresponding limit cycles.
In particular, similarities between behaviors are strongly correlated with the geometric similarities of their limit cycles.
This indicates that recurrent policies do not merely store information, but organize it into semantically meaningful representations that preserve relational structure at the level of behaviors. This might partly explain why recurrent policies often have better task generalization capabilities in meta-RL tasks~\citep{Duan2016RL2,Wang2016LearningToRL}.
Finally, our findings resonate with recent breakthroughs in biological motor control, which reveal that neural latent dynamics are conserved across distinct individuals performing identical tasks~\citep{safaie2023preserved}. This convergence suggests a fundamental principle shared by biological and artificial intelligence: neural manifolds are geometrically constrained to scaffold consistent physical behaviors.

\section{Background}
\label{sec:background}

\subsection{Sequential Decision Making with Recurrent Policies}
We study sequential decision-making problems in which policy learning requires integrating information over time, either because observations are partial (POMDP-style)~\citep{Kaelbling1998Planning} or because the task itself is underspecified and must be inferred online (meta-RL-style)~\citep{Duan2016RL2,Wang2016LearningToRL}. A convenient unifying view is to assume there exists a latent task/context variable $c$ that remains fixed for an extended interaction, together with an environment state $s_t$ that evolves over time \citep{Rakelly2019PEARL,Zintgraf2020Varibad}. The agent observes $o_t$ generated from the environment state, selects an action $a_t$, and receives reward $r_t$, while the environment dynamics depend on both the action and the fixed context \citep{Kaelbling1998Planning,Rakelly2019PEARL}. Meta-RL is naturally captured by a \emph{trial} structure: the same context $c$ is revisited over multiple episodes, where episode termination resets the environment state (e.g., $s_t$) but keeps the context unchanged~\citep{Duan2016RL2,Wang2016LearningToRL,Rakelly2019PEARL,team2023human}. Crucially, the agent is allowed to preserve internal memory across these episode boundaries, enabling within-trial adaptation without parameter updates.

Recurrent policy is the default implementation choice for this requirement by maintaining an internal hidden state $h_t\in\mathcal{H}$ that summarizes past interaction and is updated online \citep{Elman1990Finding,Hochreiter1997LSTM}:
\begin{equation}
h_{t+1}=f_\theta(h_t,o_t), \
a_t \sim \pi_\theta(\cdot\mid o_t, h_t)\ \text{or}\  a_t=\pi_\theta(o_t, h_t).
\end{equation}
This design is widely used in both POMDP and meta-RL tasks, where carrying $h_t$ across time (and, in the trial setting, across episodes) supports online adaptation and empirically strong generalization to new task instances \citep{Hausknecht2015DRQN,Duan2016RL2,Wang2016LearningToRL}. The central question of this paper is not whether recurrent policies can store historical information, but how they organize and integrate moment-to-moment decisions into coherent long-horizon strategies, and what execution-scale structure underlies their generalization and adaptation.



\subsection{Preliminaries for Dynamical System Analysis}\label{sec:pre_dyn}
\textbf{Hybrid dynamical system (HDS).}\quad
To conduct our investigation, we borrow basic concepts from dynamical system analysis~\citep{Strogatz2015Nonlinear,Goebel2012Hybrid}.
The key step is to analyze the \emph{coupled interaction} between agent and environment, rather than the policy in isolation \citep{Kaelbling1998Planning}.
We treat all variables that evolve during execution on equal footing as a joint state $x_t=(s_t, h_t)$, where $s_t$ denotes the task/environment states and $h_t$ the policy's hidden neural state.
The rollout is then modeled as a discrete-time \emph{hybrid dynamical system} governed by the interplay between the agent's internal dynamics $F_\theta$ (memory update) and the environment's transition dynamics $G$.
Crucially, the system is hybrid because the environment dynamics $G$ define a \emph{Jump Set} $\mathcal{D} \subset \mathcal{S}$ (e.g., goal locations or termination states) that triggers discrete events.
The joint evolution $\Phi_\theta$ switches behavior based on whether the physical state enters this set:
\begin{equation}
x_{t+1}=\Phi_\theta(x_t)=
\begin{cases}
\mathcal{T}_{\text{flow}}(x_t), & s_t \notin \mathcal{D},\\
\mathcal{T}_{\text{jump}}(x_t), & s_t \in \mathcal{D},
\end{cases}
\label{eq:hds}
\end{equation}
where $\mathcal{T}_{\text{flow}}$ represents the nominal coupled evolution (driven by $F_\theta$ and standard transitions in $G$), and $\mathcal{T}_{\text{jump}}$ represents the episodic reset dynamics.
In the context of Meta-RL, this jump map $\mathcal{T}_{\text{jump}}$ is distinctively structured: it re-initializes the physical state $s_t$ while preserving the policy's hidden memory $h_t$~\citep{Duan2016RL2,Wang2016LearningToRL}, thereby allowing the agent dynamics $F_\theta$ to accumulate history and enable adaptation across episodes.


\begin{figure*}[t]
  \centering
    \begin{subfigure}[b]{0.48\textwidth}
        \centering
        \includegraphics[width=0.6\linewidth]{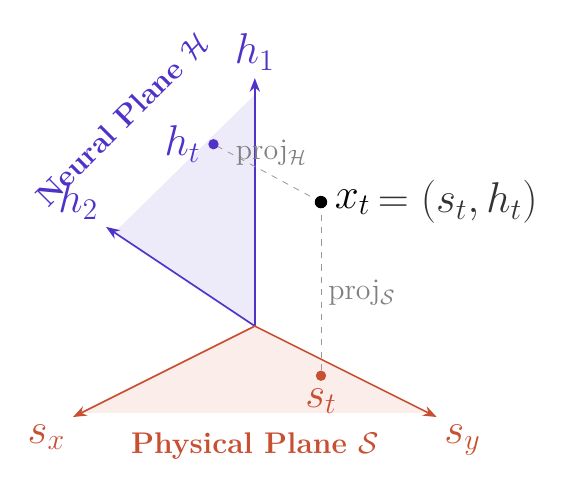}
        \caption{Unified State Space}
        \label{fig:unified_space}
    \end{subfigure}
    \hfill 
    \begin{subfigure}[b]{0.48\textwidth}
        \centering
        \includegraphics[width=0.7\linewidth]{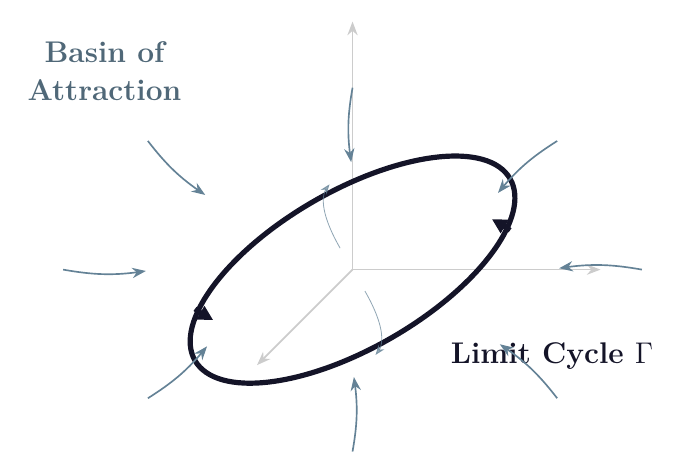}
        \caption{Example of a Limit Cycle}
        \label{fig:attracting_dynamics}
    \end{subfigure}
    \vspace{-4pt}
      \caption{\textbf{Illustration of the Hybrid Dynamical System (HDS) framework.} 
\textbf{(a) Unified State Space.} The execution state $x_t = (s_t, h_t)$ couples physical coordinates (orange axes, $\mathcal{S}$) and neural hidden states (purple axes, $\mathcal{H}$) into a single high-dimensional product space. 
\textbf{(b) Example of a Limit Cycle.} The system converges to a stable periodic orbit $\Gamma$. The vector field visualizes the basin of attraction, showing how trajectories from diverse initial conditions rapidly contract onto this cycle.}
      \label{fig:hds}
  \vspace{-8pt}
\end{figure*}

This modeling choice is useful because it turns long-horizon policy execution into a state-space object: once the interaction is expressed as an HDS on $x_t$, we can leverage established dynamical system analysis tools to characterize how trajectories are organized, and in particular to search for stable structures that can serve as building blocks for long-horizon strategies \citep{Strogatz2015Nonlinear}.

\textbf{Stable structures and limit cycles.}\quad
Dynamical systems often use \emph{stable structures} in state space to describe long-run behavior,
such as fixed points, attracting manifolds (continuous attractors), and limit cycles \citep{Strogatz2015Nonlinear,GuckenheimerHolmes1983Nonlinear}.
An attractor is a set toward which trajectories initialized nearby converge under repeated application of $\Phi_\theta$ \citep{Strogatz2015Nonlinear}.
In our setting, attracting limit cycles are particularly relevant: a limit cycle is an isolated periodic orbit that is stable to small perturbations transverse to the orbit, meaning executions return to the same cycle after disturbances \citep{Strogatz2015Nonlinear,GuckenheimerHolmes1983Nonlinear}.
Topologically, an attracting limit cycle is equivalent to a circle, which induces a natural phase coordinate along the orbit; this phase can persist over long time scales and provides a robust substrate for organizing long-horizon behavior \citep{Strogatz2015Nonlinear}.

To quantify system stability, a commonly used tool in dynamical system analysis is the $K$-step finite-time Lyapunov indicator (FTLI)~\citep{Wolf1985Lyapunov,Shadden2005FTLE}. It can be computed by simulating \textit{paired executions}: a nominal trajectory $\{x_{t}\}$ and a ``shadow'' trajectory $\{\tilde{x}_{t}\}$ initialized with a tiny controlled perturbation at time $t_{0}$, and tracking the evolution of their separation $\delta_{t}:=\|x_{t}-\tilde{x}_{t}\|$ using a normalized norm. Specifically, we define $\lambda_{FTLI}$ as the \textbf{average exponential rate of divergence} between these trajectories over the horizon $K$:
\begin{equation}
\label{eq:ftli}
\lambda_{FTLI}(t, K) = \frac{1}{K-1} \sum_{k=0}^{K-1} \log \frac{\delta_{t+k+1}}{\delta_{t+k}}
\end{equation}
Crucially, the value of $\lambda_{FTLI}$ reveals the local geometric structure of the system~\citep{Wolf1985Lyapunov, Strogatz2015Nonlinear}:
\begin{itemize}[leftmargin=*, noitemsep, topsep=0pt]
    \item \textbf{Convergence ($\lambda < 0$):} The shadow trajectory is pulled back toward the nominal path (contraction), which is the defining signature of limit cycles.
    \item \textbf{Divergence ($\lambda > 0$):} The trajectories drift apart exponentially, indicating sensitivity to initial conditions (chaos).
    \item \textbf{Neutrality ($\lambda \approx 0$):} Deviations persist without significant growth or decay, indicating a marginally stable or critical state between ordered and chaotic dynamics.
\end{itemize}

\begin{figure*}[t!]
  \centering
  \includegraphics[width=1.0\textwidth]{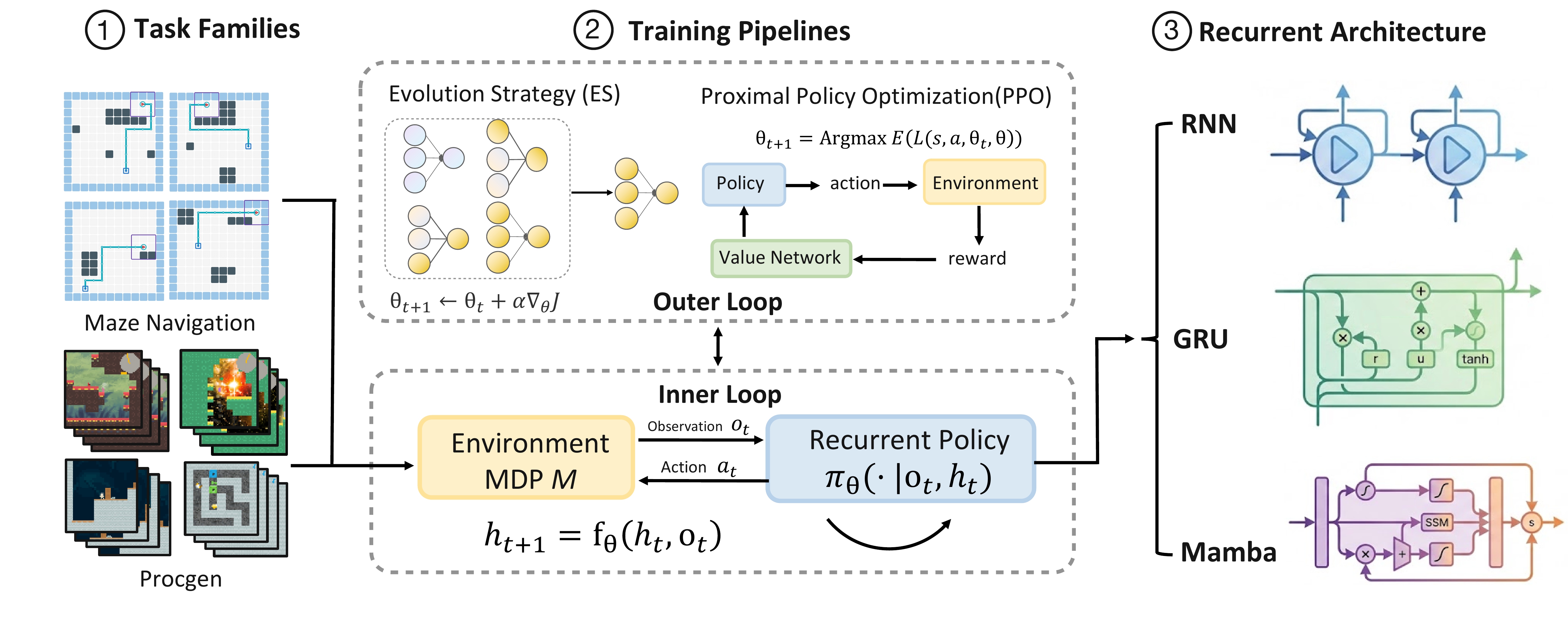}
  \vspace{-10pt}
  \caption{\textbf{Overview of the experimental framework.} We systematically evaluate recurrent policy dynamics across three axes of variation: \textbf{(1) Task Families}, ranging from partially observed grid maze navigation to high-dimensional Procgen games; \textbf{(2) Training Pipelines}, comparing gradient-based policy optimization method (PPO) against gradient-free evolution strategies; and \textbf{(3) Recurrent Architectures}, spanning classic RNNs, gated recurrent units (GRUs), and modern state-space model architecture Mamba.}
  \label{fig:setup}
\end{figure*}

\section{Experimental Design}
To unravel the underlying working mechanisms of recurrent neural policies, we carefully design a systematic experimental framework to train recurrent policies over a diverse span of tasks, training schemes, and recurrent architectures.
We are particularly interested in analyzing fully optimized recurrent policies that exhibit generalization capability. To achieve this, we adopt an RL$^2$-style training protocol \citep{Duan2016RL2}: for each trial, we sample a task instance (context) and run multiple episodes under that fixed task, where episode termination resets the environment state, but the policy retains its internal memory across episode boundaries.
We train agents to maximize return aggregated over the full trial, so that adaptation must be implemented through recurrent state dynamics rather than parameter updates.
The full training details are provided in the Appendix~\ref{app:training}.

Specifically, we vary three axes of the learning system:
\begin{itemize}[leftmargin=*, itemsep=1pt, topsep=0pt]
\item \textbf{Task family.} We evaluate both a partially observed grid maze navigation suite (procedurally generated mazes with varying origins and destinations) and several procedurally generated game-like environments from Procgen (e.g., CoinRun, Heist, Jumper) \citep{Cobbe2020Procgen}, which differ in observation structure, dynamics, and long-horizon goals. Solving these tasks requires the policy to effectively integrate information over extended temporal horizons and exhibit strong generalization capability.
\item \textbf{Training pipeline.} We train policies using both gradient-free evolution strategies (ES)~\citep{Salimans2017Evolution} and the gradient-based method PPO~\citep{Schulman2017PPO}, which allows testing if universal patterns emerge
under drastically different policy optimization procedures.
\item \textbf{Recurrent architecture.} We implement recurrent policies with vanilla RNNs, GRUs, and modern state-space architecture Mamba~\citep{Elman1990Finding,Cho2014Learning,Gu2023Mamba}, in order to identify fundamental mechanisms that are architecture-agnostic.
\end{itemize}

This setup allows us to carefully isolate impacts from task characteristics, policy optimization schemes, and recurrent architectures, thereby helping to locate the fundamental and universal mechanisms that contribute to the superior generalization and robustness performance of recurrent policies.


\section{Emergence of Limit Cycles}
\subsection{Recurrent Policies Stabilize into Limit Cycles}
With the fully optimized recurrent policies, we test these policies' behavior when solving new task instances in the corresponding environment. Again, we follow the episodic meta-RL setting, that let the agent continously solving the corresponding task through a series of trials. Although the physical environment state resets at episode boundaries, the agent's internal memory (hidden neural states) persists, enabling adaptation across episodes. Typically, with a few trials, the optimized recurrent agents start to be able to solve the unseen tasks and gradually stabilize into convergent behaviors. We collect the hidden neural state sequences of the recurrent policies in such stabilized stages, and visualize them by performing dimension reduction using principal component analysis (PCA), as illustrated in Figure~\ref{fig:limit-cycle}(a).


Interestingly, we observe that across diverse task settings (ranging from maze navigation to high-dimensional Procgen games), training pipeline (ES and PPO), and different recurrent architectures (RNN, GRU, Mamba), adapted recurrent agents always settle into stable periodic loopy structures. Some example loopy structures are presented in Figure~\ref{fig:limit-cycle}(a), more can be found in Appendix~\ref{app:lc}.
Once the policy has identified the right task context through the hidden state update, the coupled agent--environment system executes a repetitive behavioral pattern. Importantly, this emergent periodicity displays two defining dynamical properties:
\begin{itemize}[leftmargin=*, itemsep=1pt, topsep=0pt]
    \item \textbf{Emergent neural loops:} Although the hidden state $h_t$ is not externally reset at episode boundaries, the neural trajectory spontaneously converges to a closed loop in low-dimensional projections. As shown in Figure~\ref{fig:limit-cycle}(a), the neural state traces a consistent periodic orbit that persists across later episodes, maintaining a cyclic pattern through the policy's internal dynamics. 
    \item \textbf{Stability under perturbation:} The coupled system also exhibits stability under external disturbances. As depicted in Figure~\ref{fig:perturbation}, when either the physical environment state or the internal neural state is perturbed, the resulting neural loop only deviates transiently from the stable structure and then quickly recovers.
\end{itemize}

\begin{figure*}[t]
  \centering
  \includegraphics[width=0.95\textwidth]{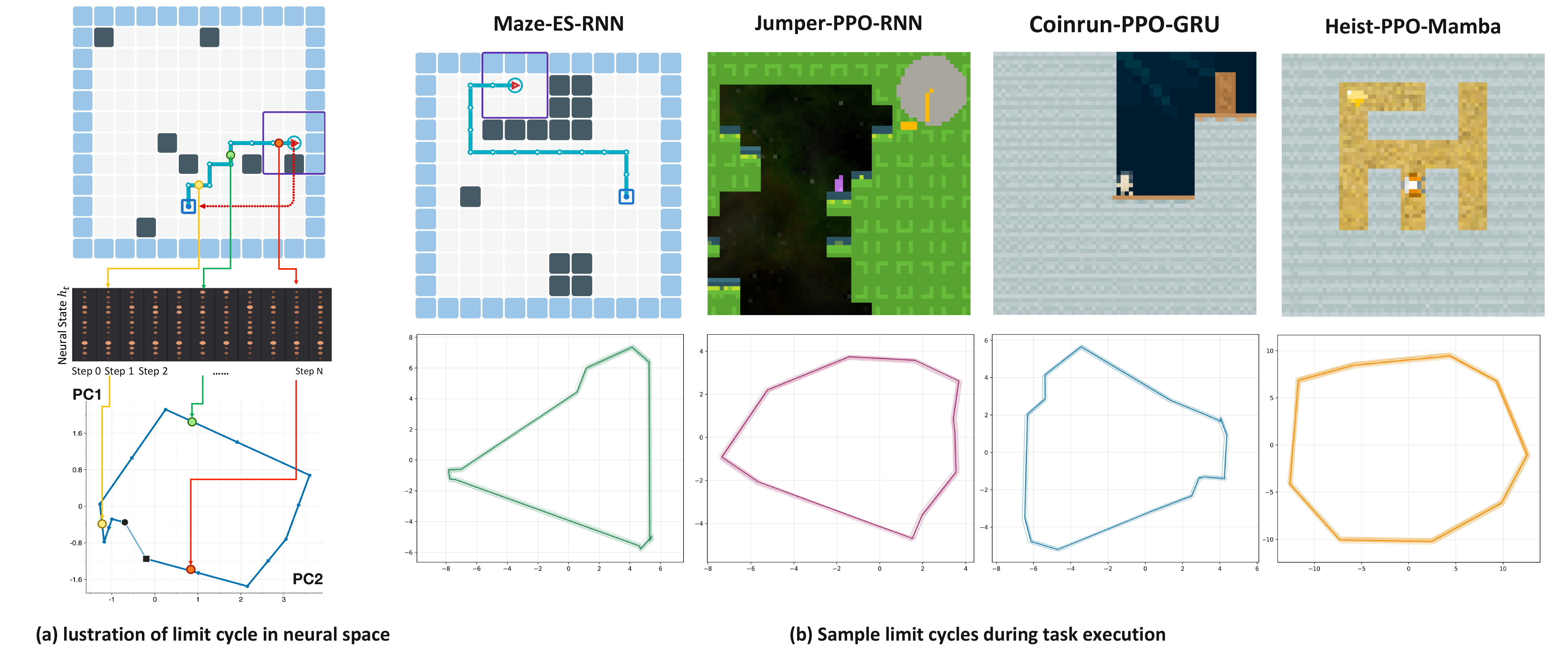}
  \caption{\textbf{Task-adapted policies stabilize into limit cycles.}
  \textbf{(a)} In the stable regime, the physical trajectory forms a topological loop enabled by episodic resets, while neural memory traces a low-dimensional closed loop (PCA projection).
  \textbf{(b)} Across task families and instances, the recurrent hidden-state trajectory converges to a stable closed orbit with task-dependent shape.}
  \label{fig:limit-cycle}
\end{figure*}

\begin{figure}[t]
  \centering
  \includegraphics[width=\columnwidth]{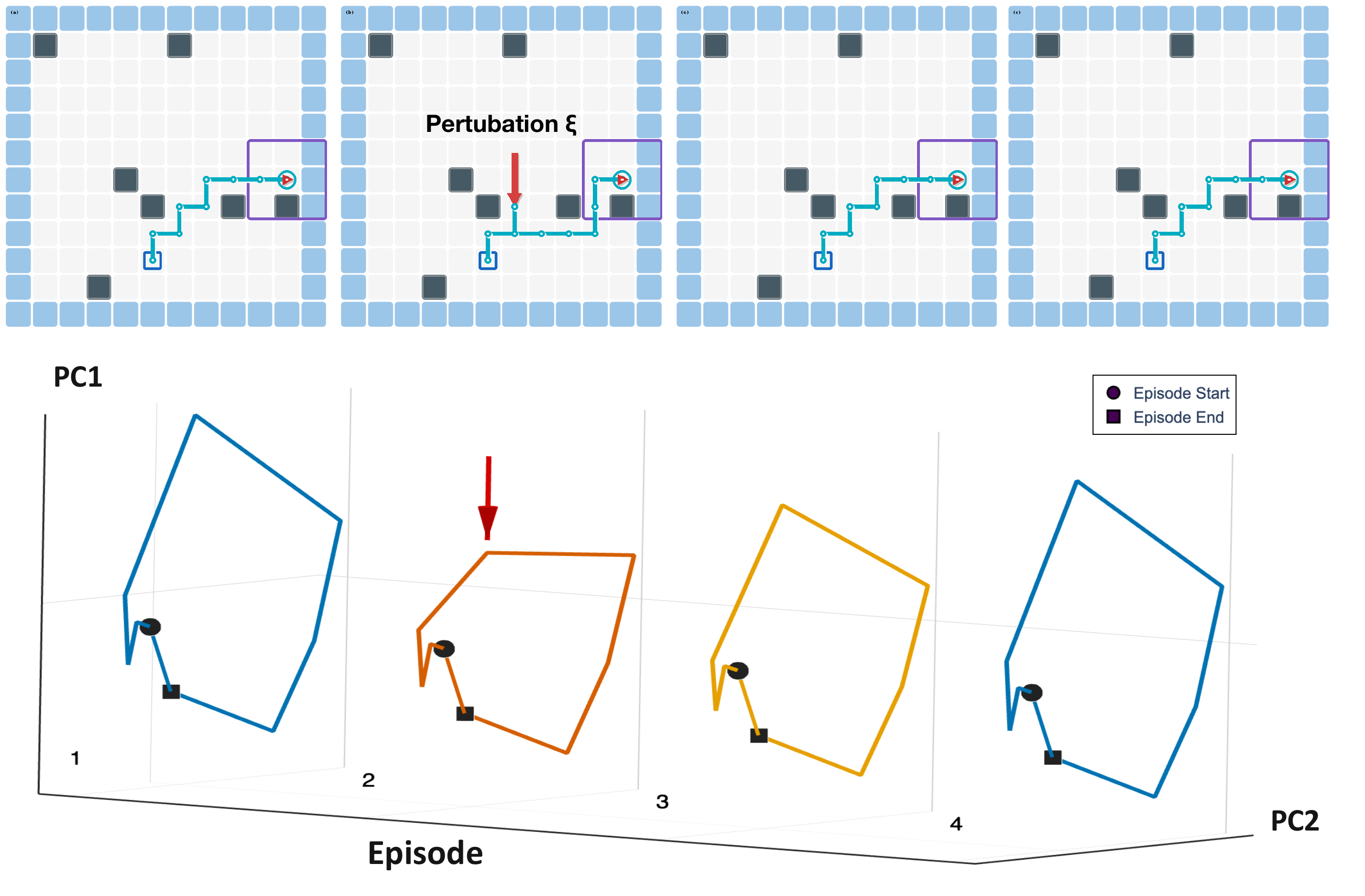}
  \caption{\textbf{Robustness and recovery under external perturbations.}
  The cycle acts as an attractor: after an external perturbation, execution transiently deviates but rapidly re-synchronizes with the nominal limit cycle. Additional experimental results on perturbation are provided in Appendix~\ref{app:perturb}.}
  \label{fig:perturbation}
  \vspace{-10pt}
\end{figure}

These properties---periodicity and convergence following perturbation--- share remarkable similarities in both topology and dynamics properties with the concept of \emph{limit cycles} in the hybrid dynamical systems (HDS) defined on the joint state $x_t=(s_t, h_t)$ of the environment state $s_t$ and hidden neural state $h_t$, as introduced in Section~\ref{sec:pre_dyn}. 


To concretely verify whether such loopy structures are indeed limit cycles, we evaluate the local stability of these structures by calculating the Time Lyapunov Indicator (FTLI) using the method described in Section~\ref{sec:pre_dyn}.
The FTLI distribution of fully optimized recurrent policies computed along the converged regime consistently concentrates below zero across diverse tasks, training methods, and recurrent architectures, as illustrated in Figure~\ref{fig:lyapunov}.
A negatively centered FTLI distribution indicates volume contraction and asymptotic stability. In contrast, randomly initialized networks evaluated under the same protocol exhibit positive exponents, indicating divergence.

The recurrence of such cyclic topological structures implies that they are stable outcomes after policy learning. The attracting nature of the limit cycle reduces sensitivity to variability arising from environmental uncertainty, maintaining hidden neural state coherence in test scenarios, thereby offering recurrent policies better generalization and robustness performance as compared to non-recurrent policies.

\begin{figure*}
  \centering
  \begin{minipage}{\textwidth}
    \centering
    \begin{subfigure}[b]{0.32\textwidth}
        \centering
        \includegraphics[width=\linewidth]{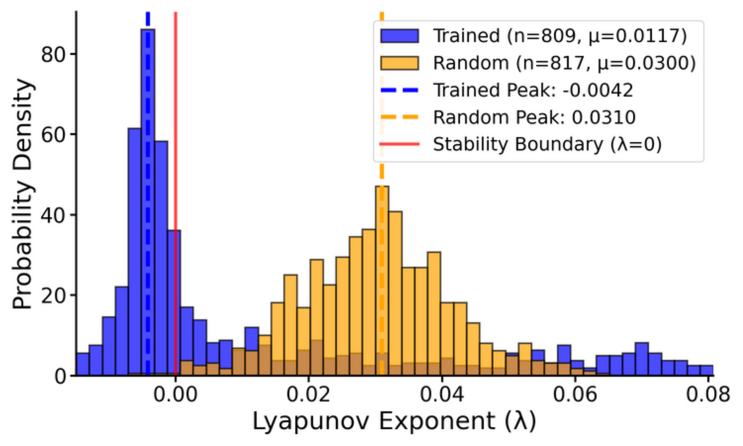}
        \caption{Maze-ES-GRU}
        \label{fig:gru_maze}
    \end{subfigure}
    \hfill 
    \begin{subfigure}[b]{0.32\textwidth}
        \centering
        \includegraphics[width=\linewidth]{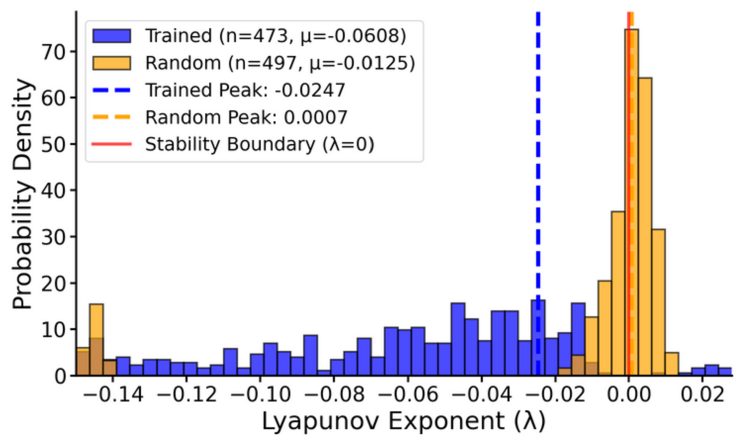}
        \caption{Jumper-PPO-GRU}
        \label{fig:gru_jumper}
    \end{subfigure}
    \hfill 
    \begin{subfigure}[b]{0.32\textwidth}
        \centering
        \includegraphics[width=\linewidth]{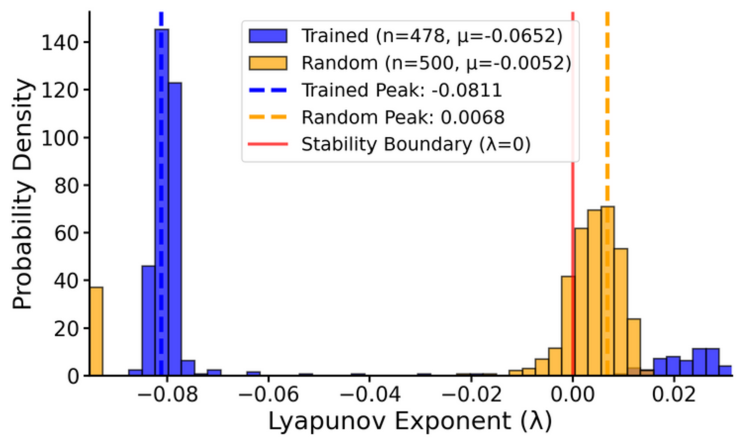}
        \caption{Jumper-PPO-Mamba}
        \label{fig:mamba_jumper}
    \end{subfigure}
    
    \caption{\textbf{Universality of contractive dynamics across architectures, tasks, and algorithms.} 
    FTLI distributions (computed with a horizon $K=1000$) for paired rollouts under small perturbations across three settings: \textbf{(a)} GRU/ES on Mazes; \textbf{(b)} GRU/PPO on Jumper; \textbf{(c)} Mamba/PPO on Jumper. Optimized policies (blue) consistently show $\lambda_{FTLI} < 0$ (contraction), while random networks (yellow) show $\lambda_{FTLI} > 0$ (chaos).}
    \label{fig:lyapunov}
  \end{minipage}
\end{figure*}

\begin{figure*}[t]
  \centering
  \begin{minipage}{\textwidth}
    \centering
    \includegraphics[width=0.95\textwidth]{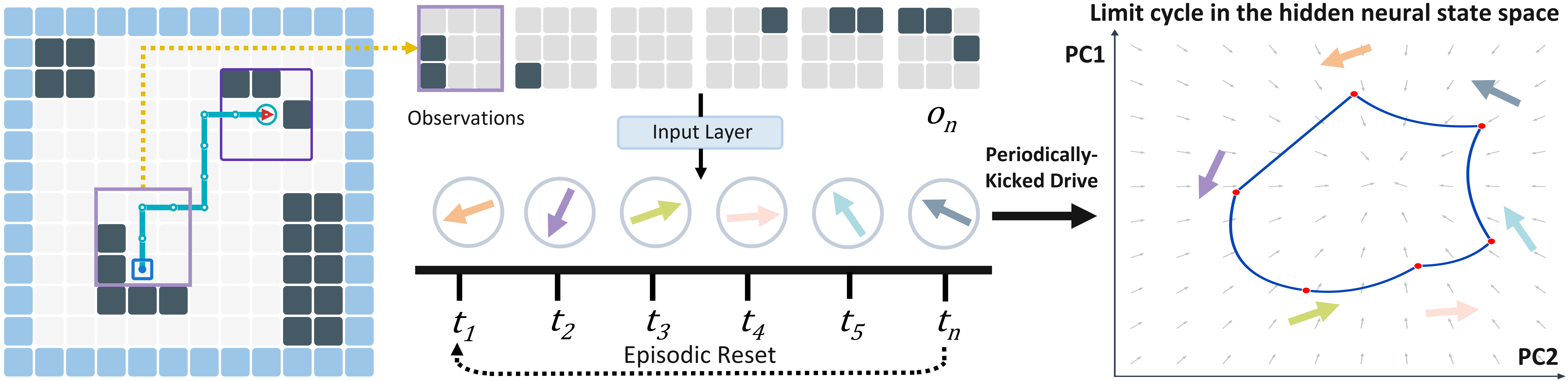}
    \caption{\textbf{Periodically-Kicked Drive (PKD) mechanism.} 
    \textbf{(Left)} Episodic resets creates a quasi-periodic \textit{Observation Sequence}. 
    \textbf{(Middle)} These observations act as a \textit{Periodically Kicked Drive} on the recurrent dynamics. 
    \textbf{(Right)} Due to the network's dissipative nature, this rhythmic forcing entrains the hidden state into a stable \textit{limit cycle}.}
    \label{fig:pkd}
  \end{minipage}
\end{figure*}

\subsection{Why Limit Cycles Arise: Episodic Task Structure Supports Periodically-Kicked Drive}
\label{sec:pkd}

Although the periodicity of the environment state can be enforced by episodic resets, the neural state preserves memory across episodes, yet still converges to stable closed loops. This 
suggests that such neural periodicity is more likely a result of the interplay between the policy's intrinsic dynamics and environment inputs.


To explain this phenomenon, we adopt the \emph{Periodically-Kicked Drive} (PKD) perspective from dynamical systems. PKD is a standard setting for analyzing how repeated periodic perturbations reshape the long-term behavior of nonlinear systems, ranging from stable periodic motion to more complex dynamics~\citep{LinYoung2010KickedOscillators, WangYoung2003StrangeAttractors}. In our setting, it offers a natural way to reason about how episodic forcing interacts with dissipative recurrent dynamics to support stable structures such as limit cycles~\citep{LohmillerSlotine1998Contraction, RussoDiBernardoSontag2010Entrainment}.


As visualized in Figure~\ref{fig:pkd}(left), the episodic reset structure guides the agent to encounter the same task conditions repeatedly. This encourages the agent to generate consistent action sequences, guiding the stabilization of the agent's behavior, eventually producing a quasi-periodic stream of observations. This observation stream functions as a periodic external ``kicked drive'' acting on the recurrent network (Figure~\ref{fig:pkd}(middle)).

Dynamical systems analysis suggests that applying a PKD to a dissipative system entrains the system's state into a stable limit cycle. We refer the interested readers to Theorem~\ref{thm:main_proof} in Appendix~\ref{app:proof_limit_cycle} for more in-depth discussion and the theoretical connection between PKD and limit cycle formation in HDS.
The recurrent policy exhibits the required dissipative properties: \textbf{local linearization analysis confirms that the operational orbit lies within a strictly contractive region of the phase space} (Appendix~\ref{app:empirical_contractivity}). This local evidence, combined with the negatively centered FTLI distribution (Figure~\ref{fig:lyapunov}), indicates that the periodic forcing provided by the observation sequence effectively entrains the hidden state into a stable limit cycle (Figure~\ref{fig:pkd} (right)).

Crucially, the existence of these limit cycles is not a trivial result of the environment reset mechanism. While resets provide the opportunity for periodicity, they do not guarantee the formation of a \textit{self-sustaining} cycle. For a limit cycle to persist in the HDS, the neural orbit must generate the exact sequence of actions that reproduces its own driving observations. 
We utilized this constraint to empirically verify the limit cycle hypothesis. By employing the PKD mechanism as a probe on large-scale randomly initialized hidden neural states ($N=100,000$), we successfully isolated stable orbits. As detailed in Appendix~\ref{app:acf} and visualized in Figure~\ref{fig:acf_clusters}, only a specific subset of these entrained orbits satisfy the causal consistency check. The successful extraction of these high-precision closed loops—which exist only for trained networks and not for random baselines—provides constructive evidence that the observed structures are functional limit cycles inherent to the learned dynamics, rather than artifacts of environmental resets.
\section{Structural Correspondence Between Behavior and Neural Dynamics}

Simply identifying the limit cycle structures hidden in recurrent policies only told half the story. We are more interested in understanding the meaning of these limit cycles: what do they represent, and how do they shape the physical behaviors of recurrent agents in the environment?

\begin{figure*}[t]
  \centering
    \includegraphics[width=0.8\textwidth]{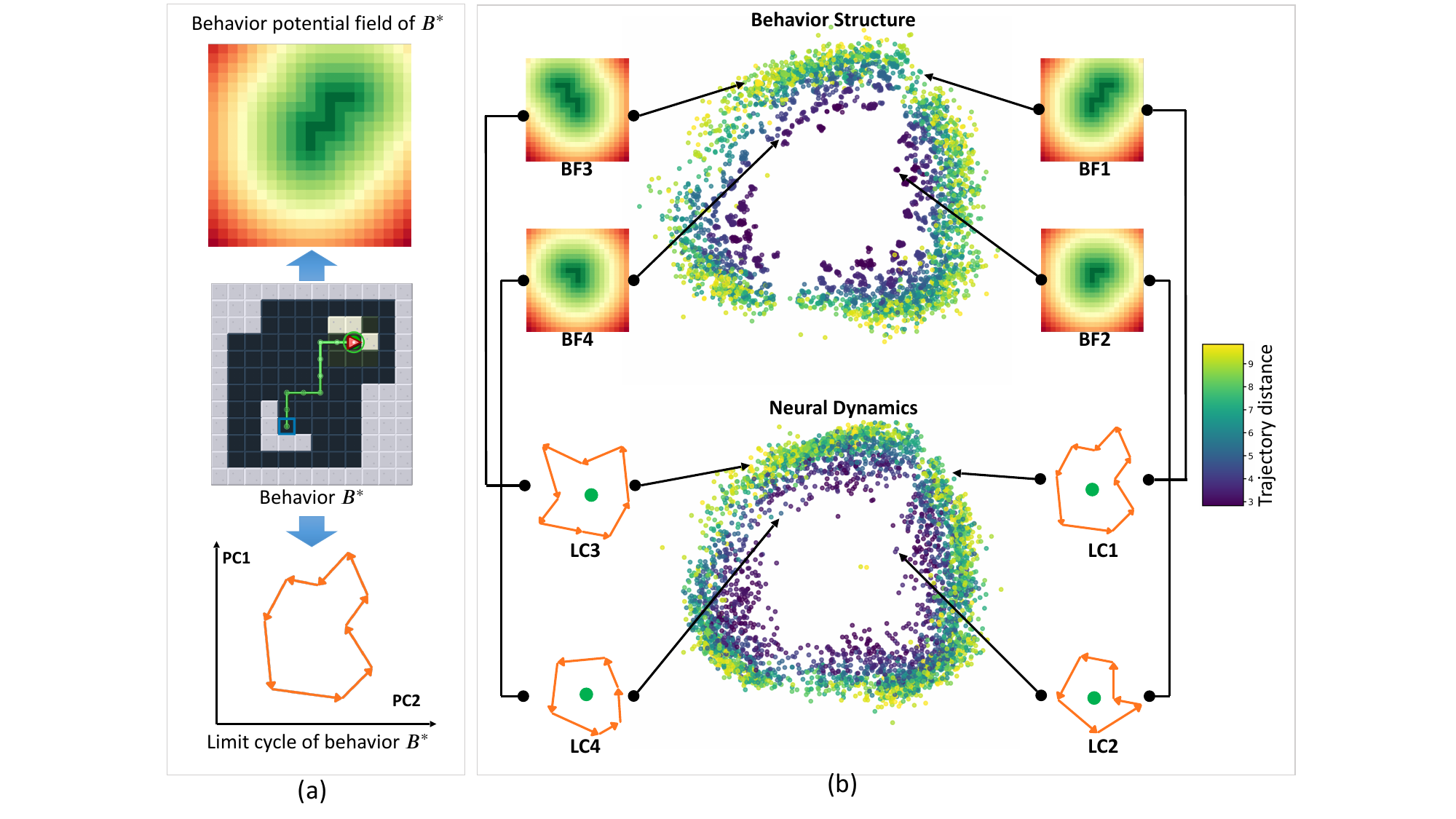}
    \caption{\textbf{Structural isomorphism between behavior and neural dynamics.} 
\textbf{(a)} A behavioral trajectory $B^*$ (represented as a Behavioral Potential Field) maps to a unique Neural Limit Cycle. 
\textbf{(b)} CCA alignment of the behavioral manifold (top) and neural dynamics manifold (bottom), visualized with the first two canonical dimensions (CM). Colored arrows highlight the correspondence between specific behavioral fields (BPF1--4) and neural attractors (LC1--4), showing that topological relationships are preserved across the mapping.}
    \label{fig:cca-overview}
\end{figure*}

To investigate this, we generate a large-scale dataset of paired samples of agent behaviors and their corresponding limit cycles. Specifically, we use the PKD mechanism described in the previous Section, by replaying observation sequences from diverse behaviors, we drive the recurrent agent into stabilized periodic regimes. To isolate causal structures, we apply an \emph{Action-Consistency Filter} (see Appendix~\ref{app:acf}) that retains only those neural cycles capable of reproducing the action sequence that generated their driving observations. This rigorous selection process yields a comprehensive library of validated limit cycles. Moreover, to model the highly diverse agent behaviors into comparable representations, we represent the agent behaviors using the \emph{Behavioral Potential Field} (BPF). Taking the maze navigation task as an example, as shown in Figure~\ref{fig:cca-overview}(a), we embed the variable-length agent trajectories into fixed-dimensional scalar fields where intensity decays with distance from the trajectory. The Euclidean distance between two BPFs approximates the transport cost between trajectories, providing a metric for behavioral similarity. For the limit cycle neural dynamics representation, we compute the centroid of each limit cycle in the neural hidden state space.

\begin{figure*}[t]
  \centering
    \includegraphics[width=\textwidth]{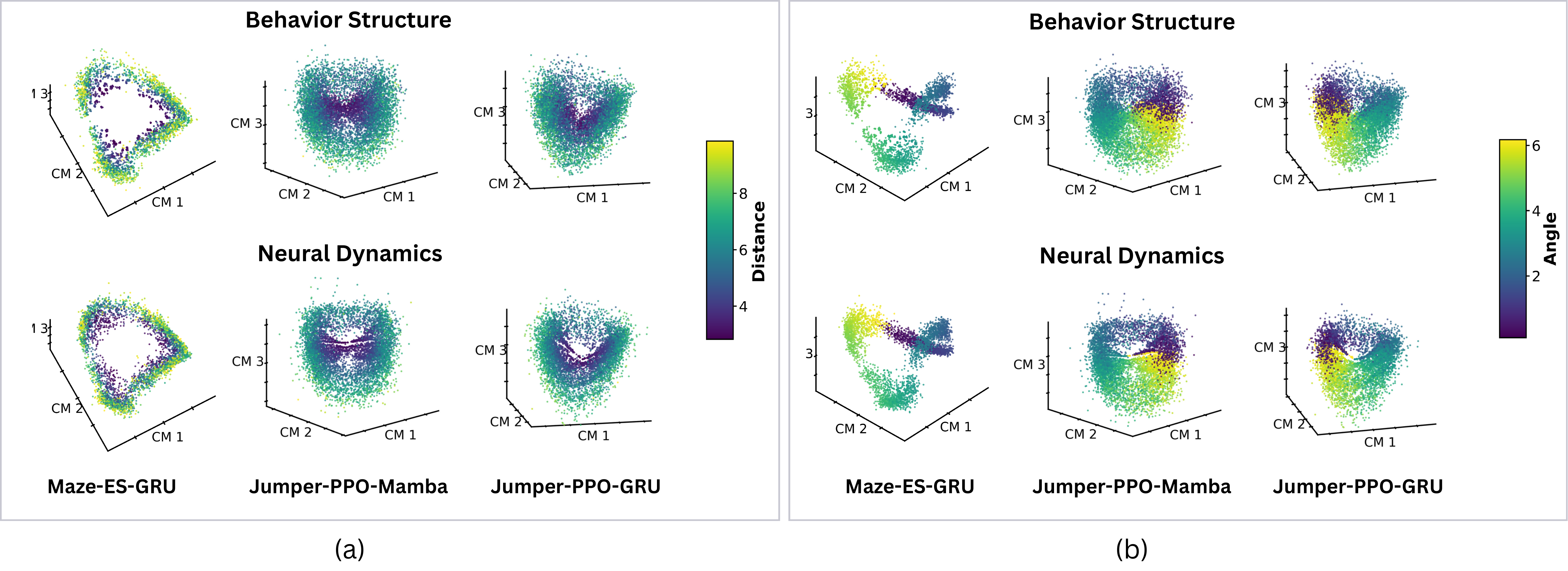}
    \caption{\textbf{Correlated geometric correspondence between behavioral manifolds and neural limit cycles.} 
    CCA projections across diverse experimental configurations: GRU/ES/Maze, Mamba/PPO/Jumper, and GRU/PPO/Jumper, visualized with the first three canonical dimensions. Each figure is plotted based on 20,000 samples per task setting. (a) Color represents the distance of agent trajectories; (b) Color represents the orientation angle of agent trajectories.}
    \label{fig:grad-cues}
\end{figure*}

We measure the alignment between the above behavior and the neural dynamics representation using Canonical Correlation Analysis (CCA)~\citep{Hotelling1936, Raghu2017}, a widely used technique in analyzing correlation structure between two datasets in multivariate statistics (see Appendix~\ref{app:neu_beh_cca} for details).
When plotting the mapped canonical variates of the behavior-neural representations using CCA, we observe a remarkable \textit{structural isomorphism} between these two domains. As illustrated in Figure~\ref{fig:cca-overview}(b), for the maze navigation task, its behavior manifold (top figure) shares almost the same geometry with the neural dynamics manifold of limit cycles (bottom figure). Moreover, such correspondence even carries over to the semantic meaning of behaviors. For example, the inner regions of both the behavior and the neural dynamics manifolds contain mostly short-distance trajectories, whereas the outer regions contain more long-distance trajectories. Even more surprisingly, as shown in Figure~\ref{fig:cca-overview}(b), the residing locations of behaviors BF1-4 with different trajectory lengths and orientations in the behavior manifold are almost identical to the residing locations of their limit cycles in the neural dynamics manifold!

Such a high-level correspondence not only exists in the maze navigation task, but also exists widely for recurrent policies trained in other task families, training methods, and model architectures, as shown in Figure~\ref{fig:grad-cues} in the main text and more results in Appendix~\ref{app:cca_structures}. In these additional results, we also find a highly correlated geometric correspondence between behavioral and neural dynamics manifolds. Such a correspondence again carries rich and relational behavioral semantics, not only encoding the distance of agent trajectories, but also other behavioral attributes like the orientation angle of trajectories. This confirms that the neural dynamics topology preserves the metric properties of the physical behavior.
All these results demonstrate a high level of structural isomorphism, that the \textbf{manifold of neural limit cycles preserves the relational geometry of the physical behaviors, mapping similar trajectories to topologically adjacent attractors}. This might partly explain why recurrent policies often have better task generalization capabilities in meta-RL tasks~\citep{Duan2016RL2,Wang2016LearningToRL}. As the recurrent agent's underlying neural dynamics can encode continuous and relational structures of behaviors, it can smoothly adjust its strategy by traversing a continuous neural dynamics manifold, allowing the policy to interpolate the structured geometry to address novel task instances or non-stationary environments.

\begin{figure}[t]
  \centering
  \begin{minipage}{0.5\textwidth}
    \centering
    \includegraphics[width=\textwidth]{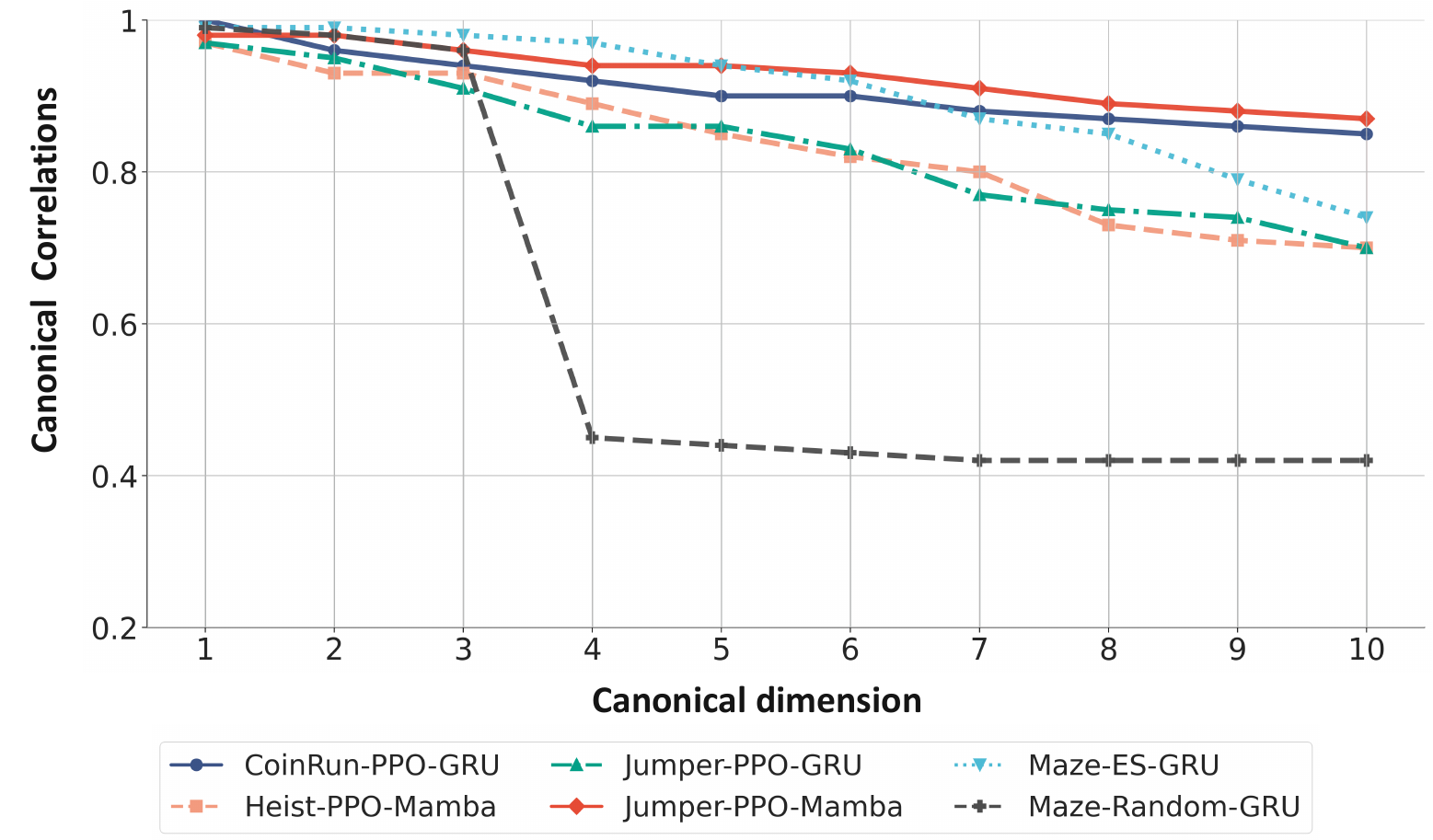}
    \caption{\textbf{Quantitative universality of high-dimensional alignment.} 
    Canonical correlation spectra for the top 10 modes. Optimized agents (solid lines) sustain correlations $>0.7$ across 10 dimensions, whereas a randomized baseline (dashed line) decays rapidly, indicating that the alignment is a learned property.}
    \label{fig:cca-spectrum}
  \end{minipage}
\end{figure}

To further confirm the strength of canonical correlation between the behavior and neural dynamics domains, we expand our CCA-based analysis to higher canonical dimensions, instead of only inspecting the visualized patterns on the first 2-3 canonical dimensions. The results are presented in Figure~\ref{fig:cca-spectrum}. Across all experimental settings—including Maze navigation and Procgen tasks trained via PPO or ES, using both GRU and Mamba—the canonical correlations between behavioral structure and neural dynamics remain high ($>0.7$) for all 10 dimensions.
To verify that this high-dimensional alignment is not an artifact of the state space dimensionality or action constraints, we conducted a random control experiment (see Appendix~\ref{app:control_experiment} for details). A randomized network, constrained to output the exact same action sequences ($N=10,000$ samples), exhibits a spectral collapse after the first 3 dimensions. This contrast indicates that the observed 10+ dimensional alignment is a specific result of the learned recurrent dynamics.

Moreover, we conduct counterfactual injection tests (see Appendix~\ref{app:counterfactual}) to rigorously verify that this behavioral-neural dynamics alignment is causal, rather than epiphenomenal. By surgically manipulating the agent's internal state—projecting optimal limit-cycle states into the canonical space and selectively lesioning specific dimensions—we test the functional role of the aligned geometry. As detailed in the convergence time histograms in Figure~\ref{fig:counterfactual_histograms} (Appendix), we demonstrate that the top CCA-aligned dimensions are both \textit{necessary} and \textit{sufficient} for optimal control. Specifically, preserving \textit{only} the top correlated modes (while randomizing all other neural dimensions) enables the agent to bypass exploration and immediately execute the optimal trajectory, shifting the convergence peak from $\sim$250 to $\sim$50 steps. Conversely, selectively randomizing just these top dimensions, even while keeping the vast majority of the neural state intact, completely abolishes this advantage, reverting the agent to baseline exploration performance. This confirms that the navigation-critical information is compressed almost entirely within the high-CCA subspace, establishing the aligned geometry as the functional basis of the agent's adaptive behavior.

\section{Conclusion}

We demonstrate that optimized recurrent neural policies in episodic settings consistently converge to attracting limit cycles. We identify Periodically-Kicked Drive (PKD) as a potential explanation for this phenomenon.
Crucially, we reveal that the intrinsic geometry of these neural limit cycles preserves the relational structure information of the physical behavior, which holds regardless of the task family, training methods, and recurrent architectures. These establish the neural dynamics manifold of limit cycles as the functional basis of adaptive behavior. 
Our findings offer new perspectives to explain many nice properties of recurrent neural policies, such as their superior robustness and adaptation/generalization capabilities.

\section{Discussion}

\subsection{Biological Connections and Geometric Universality}
Our results are consistent with recent findings in biological motor control. 
\citet{safaie2023preserved} reported that neural latent dynamics are preserved across distinct individuals within the same species performing similar motor behaviors, despite variations in individual neural circuitry. While that study hypothesized that this preservation arises from evolutionary constraints on circuit development, our findings suggest a complementary explanation based on geometric constraints.

We propose that the observed cross-individual alignment is an emergent property of solving a shared physical task. Since the physical environment imposes a specific relational geometry on the space of feasible behaviors, and optimized neural manifolds structurally mirror this behavioral geometry (structural isomorphism), the neural manifolds of distinct agents optimizing for the same task tend to converge to a similar geometric structure. Consequently, the "shared neural landscape" observed by \citet{safaie2023preserved} may naturally arise whenever recurrent systems—biological or artificial—optimize policies for the same physical reality.

\subsection{Limitations and Future Directions}
While this work establishes a correspondence between neural limit cycles and behavioral motifs, several aspects warrant further investigation.

First, our analysis focused on episodic tasks where recurrent policies converge to limit cycles. However, not all Reinforcement Learning (RL) settings induce explicit periodicity. In non-episodic or continuous control tasks, recurrent policies might encode optimal strategies via different dynamical regimes, such as stable fixed points or chaotic attractors. We hypothesize that the principle of structural isomorphism extends to these alternative topological structures, though verification in non-periodic settings remains to be performed.

Second, the Behavioral Potential Field (BPF) metric was implemented here primarily for 2D navigation tasks. Theoretically, BPFs can be generalized to high-dimensional state spaces, such as the joint-space trajectories of robotic manipulators. However, computing BPFs in high-dimensional spaces introduces significant computational scalability challenges. Developing efficient representations to quantify behavioral similarity in high-dimensional continuous control domains is a necessary step for generalizing these findings to complex robotic systems.

Finally, while this study characterizes the stable limit cycles representing the converged policy, the transient dynamics of the optimization process remain to be explored. A critical next step is to analyze how the recurrent policy utilizes reward signals to drive the neural state from initialization toward these stable attractors. Applying dynamical systems tools to these transient trajectories could help elucidate the implicit "in-context" optimization algorithms implemented by the RNN, which are meta-learned from data and may differ from hand-engineered update rules.

\section*{Acknowledgement}

We thank Jia Liu from the Department of Psychology and Cognitive Sciences, Tsinghua University, for his support of this work and financial support provided by the department.
We are grateful to Xuena Wang, Meng Lu, Mingli Yuan, and Hengshuai Yao for their valuable discussions and insightful feedback.
This work was also supported by funding from Sapient.




\bibliographystyle{icml2026}
\bibliography{example_paper}

\newpage
\appendix
\onecolumn

\section{Experiment Setup}

To evaluate the universality of the observed dynamical structures, we conduct experiments across two distinct regimes: (1) a controlled, open-ended grid world trained via neuro-evolution, and (2) high-dimensional visual control tasks trained via gradient-based reinforcement learning.

\subsection{Open-Ended Maze Navigation (Evolution Strategies)}
\label{app:training}

\textbf{Task Environment.} We implement an open-ended navigation task on a $10 \times 10$ discrete grid. To ensure diversity, mazes are procedurally generated by assigning obstacles to cells with a 30\% probability, followed by a connectivity check (via connected component analysis using OpenCV) to discard unsolvable layouts. This results in a large combinatorial search space ($2^{100}$ potential configurations), preventing the agent from relying on memorization. 

The agent operates under partial observability, receiving input only from a $3 \times 3$ local window centered on its current position. This information bottleneck requires the agent to integrate information over time to build an internal model of the maze structure. The action space consists of four discrete movements: \textit{up, down, left, right}.

\textbf{Recurrent Architecture.} The agent is parameterized by a Gated Recurrent Unit (GRU). At each time step $t$, the network receives a concatenated input vector $[o_t, h_{t-1}]$, where $o_t$ is the flattened local perception and $h_{t-1}$ is the hidden state from the previous step. A linear readout layer maps the updated hidden state $h_t$ to a softmax distribution over the four actions.

\textbf{Meta-RL Trial Structure.} We adopt an $RL^2$-style trial protocol to foster rapid adaptation. A single trial consists of multiple episodes within the same fixed maze. Crucially, when the agent reaches the goal (or times out), its physical position is reset to the start, but its \textit{hidden state is preserved}. This continuity of memory allows the agent to exploit information gathered in early episodes to optimize trajectory efficiency in later episodes.

\textbf{Training via Evolution Strategies.} We optimize the policy using OpenAI Evolution Strategies. The objective is to minimize the path length of the \textit{final} episode in a trial, incentivizing the agent solve the maze. The fitness evaluation logic is detailed in Algorithm~\ref{alg:nes_maze}.

\begin{algorithm}[h]
\caption{Meta-Training Fitness Evaluation}
\label{alg:nes_maze}
\begin{algorithmic}[1]
\STATE {\bfseries Input:} Population of perturbed parameters $\{\theta_i\}_{i=1}^N$, Maze Generator $\mathcal{M}$
\FOR{each candidate $\theta_i$ in parallel}
    \STATE Sample maze $M \sim \mathcal{M}$
    \STATE Initialize hidden state $h_0 = \mathbf{0}$, position $p_0 = \text{Start}$
    \STATE \textit{// Execute Trial (Sequence of Episodes)}
    \WHILE{Trial Budget not exceeded}
        \STATE Run episode until Goal or Timeout
        \IF{Goal Reached}
            \STATE $L_{last} \leftarrow$ Current episode path length
            \STATE Reset $p \to \text{Start}$ \textbf{only} (keep $h$)
        \ELSE
            \STATE Reset $p \to \text{Start}$ \textbf{and} $h \to \mathbf{0}$ (soft failure reset)
        \ENDIF
    \ENDWHILE
    \STATE {\bfseries Return} Fitness $F_i = \text{CentralRankTransform}(L_{last})$ 
    \STATE \textit{// Higher rank assigned to shorter final path lengths}
\ENDFOR
\end{algorithmic}
\end{algorithm}

\subsection{High-Dimensional Visual Control (Procgen)}

\textbf{Task Environment.} We evaluate our hypothesis on the Procgen Benchmark, a suite of procedurally generated arcade-style environments that feature high-dimensional visual observations ($64 \times 64 \times 3$ RGB) and diverse level structures. We specifically focus on tasks such as \textit{CoinRun}, \textit{Jumper}, and \textit{Heist}, which require the agent to navigate complex spatial layouts and retain memory of key environmental features. To test generalization, we use a fixed set of seeds for training levels and a disjoint set of seeds for evaluation levels.

\textbf{Recurrent Architecture.} The agent architecture consists of a convolutional encoder followed by a recurrent dynamics model. The encoder processes the visual input into a compact latent embedding. This embedding is then fed into the recurrent core—either a standard Gated Recurrent Unit (GRU) or a Mamba-based State Space Model—which maintains the agent's hidden state $h_t$. The recurrent output is projected by two separate linear heads: a policy head producing a categorical distribution over discrete actions, and a value head estimating the state value function.

\textbf{Meta-RL Trial Structure.} Consistent with the $RL^2$ framework, we organize training into ``trials.'' Each trial consists of a fixed number of episodes (e.g., $K=2 \sim 5$) played on the same unique level seed. Crucially, the recurrent hidden state is preserved across episode boundaries within a trial, and is only reset to zero when a new trial (i.e., a new level seed) begins. This setup forces the agent to use its activation dynamics to adapt to the specific level layout over the course of the trial.

\textbf{Training via PPO.} We train the recurrent policies using Proximal Policy Optimization (PPO) with Generalized Advantage Estimation (GAE). We use the Adam optimizer with a learning rate of $5 \times 10^{-4}$. The training process runs on parallel environments to collect experience, with gradients averaged across mini-batches. To encourage exploration and prevent premature convergence, we include an entropy regularization term in the objective. The optimization loop allows the agent to continuously refine its internal dynamics to maximize the cumulative reward over the entire trial duration.

\section{Hybrid Dynamical Systems}
\label{app:hds}

\textbf{System Modeling.} To understand the causal coupling between the agent and the environment, we formalize the interaction as a closed-loop \textbf{Hybrid Dynamical System (HDS)}. We define the joint state space as $\mathcal{X} = \mathcal{S} \times \mathcal{H}$, where $\mathcal{S}$ represents the discrete environmental states (e.g., maze coordinates, obstacle configurations) and $\mathcal{H} \subseteq \mathbb{R}^K$ denotes the continuous neural hidden states.

The time-evolution of this joint system can be condensed into two coupled recurrence relations representing the agent and the environment respectively:

\[
\begin{cases}
h_{t+1} = f_{\theta}(\mathrm{obs}(s_t),h_t), & \text{(Agent Memory Dynamics)} \\
s_{t+1} = \mathcal{T}(s_t, \pi_{\theta}(s_t, h_t)). & \text{(Physical State Evolution)}
\end{cases}
\]

Here, the first equation governs the agent's internal evolution, where the recurrent function $f_{\theta}$ updates the memory based on the partial observation derived from the current physical state $s_t$. The second equation governs the physical evolution, where the environment's transition operator $\mathcal{T}$ updates the physical state based on the action sampled directly from the policy $\pi_{\theta}(s_t, h_t)$.

This formulation establishes that the agent-environment loop constitutes a unified dynamical entity. By tracking the evolution of the joint state $x_t = (s_t, h_t)$ in the product space $\mathcal{X}$, we address the bidirectional coupling characteristic of embodied intelligence: we no longer view the network as a passive responder to environmental stimuli, but as an active component in a bidirectional causal loop where neural states drive physical transitions and vice versa.

\section{Empirical Verification of Local Contractivity in Neural State Space}
\label{app:empirical_contractivity}

Before formally deriving the Periodically-Kicked Drive (PKD) mechanism, we present empirical evidence verifying a necessary condition for limit cycle formation: the existence of a \textbf{contractive region} in the neural state space. We investigate whether the stable trajectories generated by a trained agent are confined within a region where the intrinsic neural dynamics exhibit volume contraction.

\subsection{Local Linear Dynamical System (LDS) Analysis}

To quantify the local stability characteristics along the neural trajectory, we analyze the agent's recurrent dynamics as a continuous-time vector field. For a recurrent unit (e.g., RNN) with hidden state $h \in \mathbb{R}^n$, the intrinsic update dynamics (ignoring external inputs for local stability analysis) can be approximated by the vector field:
\begin{equation}
    \dot{h} \approx F(h) = \tanh(W_{rec} h + b) - h
\end{equation}
where $W_{rec}$ represents the recurrent weight matrix and $b$ is the bias. An equilibrium or stable manifold requires the divergence of this field to be negative.

We employ Local Linearization to analyze the stability at each point $h_t$ along the recorded trajectory. By performing a first-order Taylor expansion around $h_t$, we approximate the non-linear system as a Local Linear Dynamical System (LDS):
\begin{equation}
    F(h) \approx F(h_t) + J_F(h_t) \cdot (h - h_t)
\end{equation}
where $J_F(h_t) = \frac{\partial F}{\partial h}\big|_{h_t}$ is the Jacobian matrix evaluated at the current state.

The stability of the system at point $h_t$ is determined by the eigenspectrum of the Jacobian $J_F(h_t)$. Specifically, if the real parts of all eigenvalues $\lambda_i$ are negative:
\begin{equation}
    \max_i \text{Re}(\lambda_i(J_F(h_t))) < 0
\end{equation}
then the local dynamics are contractive, meaning that small perturbations in the state space will decay over time, pulling the system back towards the attractor.

\subsection{Experimental Results}

We conducted this analysis on a GRU agent fully trained on a POMDP Grid Maze task. Once the agent stabilized into a repetitive optimal strategy, we captured a sequence of 50 neural states along its limit cycle.

Figure~\ref{fig:eigen_spectrum} visualizes the results of this stability check:
\begin{itemize}
    \item \textbf{Behavioral \& Neural Cycles (Top):} The agent performs a consistent loop in the maze (Top Left), which corresponds to a distinct closed orbit in the PCA projection of the neural states (Top Right).
    \item \textbf{Eigenvalue Spectra (Bottom):} For every single time step ($T1$ to $T50$) along this orbit, we computed the eigenvalues of the local Jacobian. The bar charts display the sorted real parts of these eigenvalues.
\end{itemize}

\textbf{Conclusion:} As evidenced by the uniformly blue histograms, the maximum real eigenvalue is strictly negative for \textbf{every point} along the trajectory. This confirms that the entire operational orbit of the agent lies within a locally contractive region of the state space. This consistent contractivity provides the dynamical basis for the formation of stable limit cycles, as it ensures that the neural state is constantly being converging towards the attractor, counteracting any expansive effects from environmental noise or inputs.

\begin{figure*}[h]
    \centering
    \includegraphics[width=0.7\textwidth]{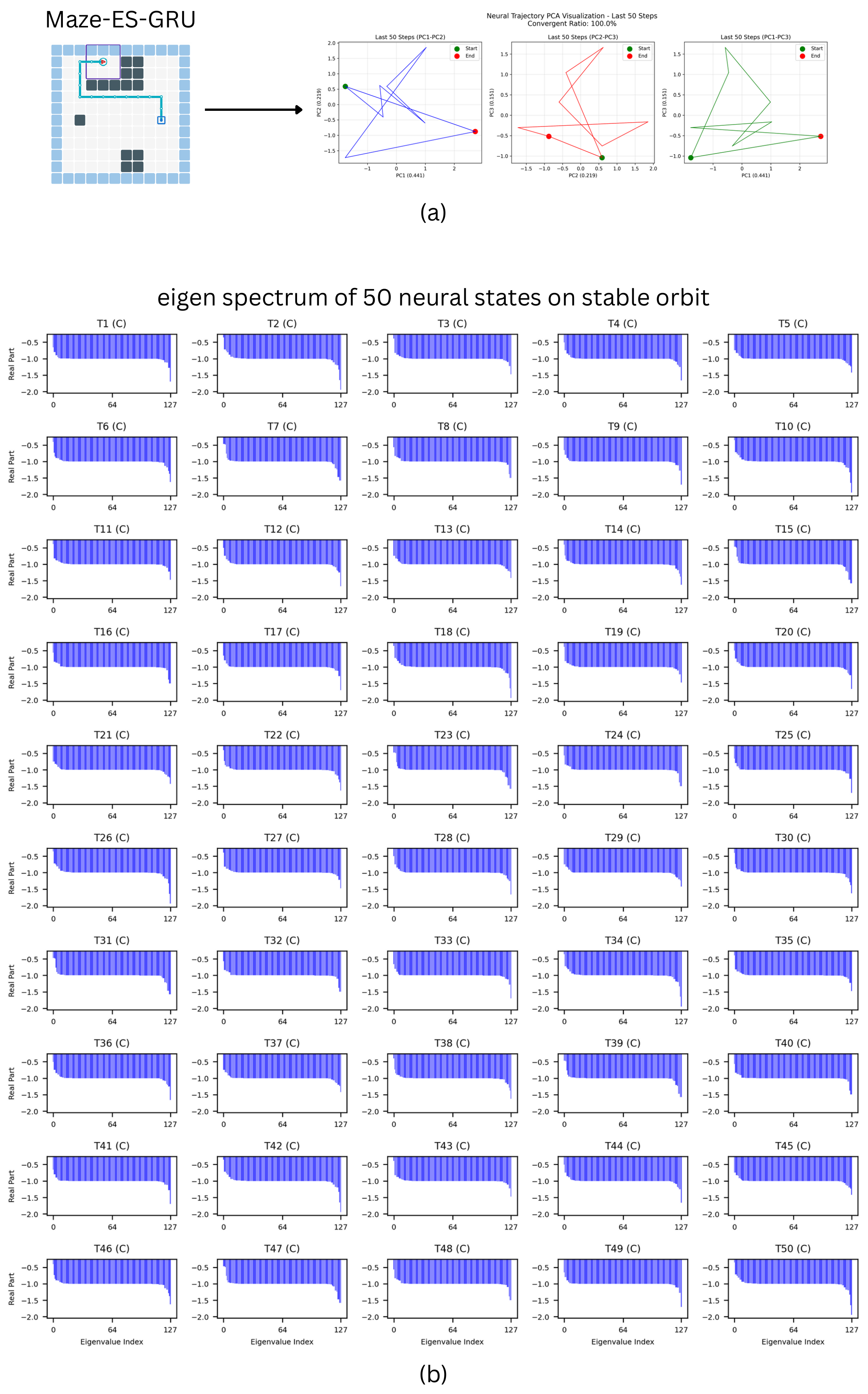}
    \caption{\textbf{Empirical verification of contractive dynamics along the neural limit cycle.}
    \textbf{(Top Left)} The physical trajectory of a converged GRU agent in a POMDP Maze task.
    \textbf{(Top Right)} PCA visualization of the neural hidden states for the last 50 steps, revealing a low-dimensional closed orbit structure.
    \textbf{(Bottom)} Detailed eigenvalue spectra for the local Jacobian matrix at each of the 50 time steps along the cycle. The y-axis represents the real part of the eigenvalues.
    Across all time steps ($T1-T50$), the spectral components are entirely negative (blue bars), with no positive components (which would appear as red bars above the dashed line). This indicates that the local dynamics are strictly contractive ($\text{Re}(\lambda) < 0$) throughout the entire execution of the behavior, providing the necessary dynamical condition for stable limit cycle maintenance.}
    \label{fig:eigen_spectrum}
\end{figure*}

\clearpage

\section{Theoretical Analysis of Stability and Limit Cycles}
\label{app:proof_limit_cycle}

\paragraph{Conditioned Neural Subsystem.}
We analyze the stability of the recurrent policy by focusing on its neural subsystem,
while conditioning on the effective input sequence induced by the closed-loop
agent--environment interaction described in Section~\ref{app:hds}.
Specifically, the physical environment state $s_t$ influences the recurrent dynamics
only through the observation and encoding pipeline, yielding an effective input
sequence $\{u_t\}$ to the recurrent policy.
\emph{Under this conditioned viewpoint}, we provide a formal proof that a periodically driven,
dissipative recurrent policy induces a unique, globally attracting stable limit cycle.

\subsection{Setup and Definitions}

We first formalize the properties of the periodic input drive and the dissipative nature of the policy's recurrent dynamics.

\begin{definition}[Periodic Input Drive]
\label{def:periodic_input}
Let $\{u_t\}_{t\ge 0}$ denote the sequence of effective inputs to the recurrent policy,
induced by the closed-loop agent--environment dynamics through the observation
and input-encoding pipeline. We say the drive is $T$-periodic if:
\begin{equation}
u_{t+T} = u_t, \qquad \forall t \ge 0.
\end{equation}
\end{definition}

\begin{definition}[Contractive Region]
\label{def:contractive_region}
A subset $\mathcal{C} \subset \mathbb{R}^n$ is called a \emph{contractive region} if it is forward-invariant under the policy dynamics and the recurrent update is strictly contractive within $\mathcal{C}$. Specifically:
\begin{enumerate}
    \item (\textbf{Forward invariance}) For any $h \in \mathcal{C}$ and any admissible input $u$, we have
    \[
        f_\theta(u, h) \in \mathcal{C}.
    \]
    \item (\textbf{Local contractivity}) There exists a norm $\|\cdot\|$ and a constant $0 < \lambda < 1$ such that for all $h, h' \in \mathcal{C}$ and any input $u$,
    \[
        \|f_\theta(u, h) - f_\theta(u, h')\| \le \lambda \|h - h'\|.
    \]
\end{enumerate}
\end{definition}

\begin{definition}[Dissipative Policy Dynamics (Restricted)]
\label{def:dissipative}
Let $h_t \in \mathbb{R}^n$ be the policy memory state evolving according to $h_{t+1} = f_\theta(u_t, h_t)$. 
We assume that there exists a contractive region $\mathcal{C} \subset \mathbb{R}^n$ such that the dynamics are strictly contractive within $\mathcal{C}$ in the sense of Definition~\ref{def:contractive_region}.
\end{definition}

To analyze the long-term behavior over full episodes, we introduce the stroboscopic map.

\begin{definition}[Stroboscopic Map]
Fix a $T$-periodic effective input sequence $\{u_t\}_{t=0}^{T-1}$,
induced by a converged closed-loop behavioral regime of the hybrid system. Let $F_t(\cdot) := f(u_t, \cdot)$. The stroboscopic map $S: \mathbb{R}^n \to \mathbb{R}^n$ represents the state evolution over one complete period $T$:
\begin{equation}
S := F_{T-1} \circ F_{T-2} \circ \cdots \circ F_0.
\end{equation}
A fixed point $h^*$ of $S$ (i.e., $S(h^*) = h^*$) corresponds to the initial state of a $T$-periodic trajectory. We assume that the trajectory $\{h_t\}$ remains entirely within the contractive region $\mathcal{C}$.
\end{definition}

\subsection{Proof of Existence and Uniqueness}

\begin{lemma}[Contraction of the Stroboscopic Map]
Assume the policy dynamics are contractive within a region $\mathcal{C}$ and that the trajectory remains in $\mathcal{C}$. 
Then the stroboscopic map $S$ restricted to $\mathcal{C}$ is a contraction mapping with Lipschitz constant $\lambda^T$. That is:
\begin{equation}
\|S(h_1) - S(h_2)\| \le \lambda^T \|h_1 - h_2\|, \qquad \forall h_1, h_2 \in \mathcal{C}.
\end{equation}
\end{lemma}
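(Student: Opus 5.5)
The argument is a direct induction on the number of composed maps, using the two clauses of Definition~\ref{def:contractive_region} separately: forward invariance keeps every intermediate iterate inside $\mathcal{C}$, and local contractivity is then applied once per step. For $k=0,1,\dots,T$ set
\[
S_k := F_{k-1}\circ\cdots\circ F_0,
\]
with $S_0=\mathrm{id}$, so that $S=S_T$. We claim that for all $h_1,h_2\in\mathcal{C}$,
\[
S_k(h_1),\,S_k(h_2)\in\mathcal{C}
\quad\text{and}\quad
\|S_k(h_1)-S_k(h_2)\|\le \lambda^k\|h_1-h_2\|,
\]
where $\|\cdot\|$ is the norm fixed in Definition~\ref{def:contractive_region}.

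The base case $k=0$ is trivial. For the inductive step, write $S_{k+1}=F_k\circ S_k$.

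First, forward invariance (clause 1 of Definition~\ref{def:contractive_region}), applied with the admissible input $u_k$, gives $S_{k+1}(h_i)=f_\theta(u_k,S_k(h_i))\in\mathcal{C}$. The inputs $u_k$ come from the fixed $T$-periodic drive of Definition~\ref{def:periodic_input} and are assumed admissible.

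Second, since both $S_k(h_1)$ and $S_k(h_2)$ lie in $\mathcal{C}$ by the induction hypothesis, local contractivity (clause 2) applies:
\[
\|S_{k+1}(h_1)-S_{k+1}(h_2)\| = \|F_k(S_k(h_1))-F_k(S_k(h_2))\|
\le \lambda\,\|S_k(h_1)-S_k(h_2)\|
\le \lambda^{k+1}\|h_1-h_2\|.
\]
Taking $k=T$ yields $\|S(h_1)-S(h_2)\|\le\lambda^T\|h_1-h_2\|$.

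To conclude that $S|_{\mathcal{C}}$ is a contraction mapping and not merely Lipschitz, note that $0<\lambda<1$ and $T\ge1$ give $\lambda^T<1$. By the invariance part of the claim at $k=T$, $S$ also maps $\mathcal{C}$ into itself. This self-map property is what the subsequent Banach fixed-point argument will require, so we record it explicitly.

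The only step needing care is that the contraction constant in Definition~\ref{def:contractive_region} holds only for pairs of points inside $\mathcal{C}$. It would be invalid to compose Lipschitz bounds without first knowing that every intermediate iterate of both trajectories stays in $\mathcal{C}$. This is why the claim pairs invariance with the distance bound inside a single induction, and it is also why forward invariance, rather than the weaker standing assumption that ``the trajectory remains in $\mathcal{C}$,'' is the hypothesis actually used: that assumption concerns one trajectory, whereas here both iterates must remain in $\mathcal{C}$. Everything else is the routine composition of Lipschitz constants.
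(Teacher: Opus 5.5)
Your proof is correct and follows the same route as the paper, which observes that $S$ is a composition of $T$ maps $F_t$, each $\lambda$-Lipschitz, so that $L_S \le \lambda^T < 1$. Your induction is a more careful version of that argument: it uses forward invariance explicitly to keep both intermediate iterates in $\mathcal{C}$, which the paper's one-line composition of Lipschitz constants tacitly requires, and it records that $S$ maps $\mathcal{C}$ into itself, as the subsequent Banach fixed-point step needs.
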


\begin{proof}
By Definition \ref{def:dissipative}, each single-step map $F_t$ is $\lambda$-Lipschitz. Since $S$ is the composition of $T$ such maps, the Lipschitz constant of $S$ is the product of the individual constants. Thus, $L_S \le \prod_{t=0}^{T-1} \lambda = \lambda^T$. Since $0 < \lambda < 1$, we have $\lambda^T < 1$.
\end{proof}

\begin{theorem}[Existence of a Stable Limit Cycle within a Contractive Region]
\label{thm:main_proof}
Given a $T$-periodic input and dissipative policy dynamics within a contractive region $\mathcal{C}$, there exists a unique $T$-periodic trajectory entirely contained in $\mathcal{C}$. 
Furthermore, for any initial state $h_0 \in \mathcal{C}$, the policy trajectory converges exponentially to this periodic orbit.
\end{theorem}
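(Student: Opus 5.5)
The plan is to reduce the statement to the Banach fixed-point theorem applied to the stroboscopic map $S$ on $\mathcal{C}$, and then lift the fixed point back to a periodic orbit.

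\textbf{Step 1: $S$ maps $\mathcal{C}$ into itself.} Forward invariance in Definition~\ref{def:contractive_region} gives $F_t(\mathcal{C})\subseteq\mathcal{C}$ for every admissible $u_t$. Composing the $T$ maps then gives $S(\mathcal{C})\subseteq\mathcal{C}$.

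\textbf{Step 2: existence of a unique fixed point.} By the preceding Lemma, $S|_{\mathcal{C}}$ is a contraction with constant $\lambda^T<1$. To apply Banach's theorem I need $(\mathcal{C},\|\cdot\|)$ to be complete. This is the main obstacle, because Definition~\ref{def:contractive_region} does not say that $\mathcal{C}$ is closed. I would handle it in one of two ways.
\begin{itemize}
\item Make closedness explicit as a standing assumption (all norms on $\mathbb{R}^n$ are equivalent, so a closed subset is complete).
\item Pass to the closure $\overline{\mathcal{C}}$. The maps $F_t$ are continuous on $\mathcal{C}$ (they are Lipschitz there), so they extend uniquely by continuity to $\overline{\mathcal{C}}$, and the extensions are still $\lambda$-Lipschitz and map $\overline{\mathcal{C}}$ into itself. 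This gives a fixed point in $\overline{\mathcal{C}}$. Placing it inside $\mathcal{C}$ still needs an extra hypothesis, for example closedness, or that the limit of the actual trajectory lies in $\mathcal{C}$, which is the paper's assumption that the trajectory stays in $\mathcal{C}$.
\end{itemize}
I would state the closedness assumption explicitly. With it, Banach gives a unique $h^*\in\mathcal{C}$ with $S(h^*)=h^*$, and the iterates $S^k(h_0)$ converge to $h^*$ with $\|S^k(h_0)-h^*\|\le\lambda^{kT}\|h_0-h^*\|$.

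\textbf{Step 3: periodic orbit and its uniqueness.} Define $h^*_0=h^*$ and $h^*_{t+1}=F_t(h^*_t)$, with the indices of $F_t$ taken mod $T$ because the input is $T$-periodic. Then $h^*_T=S(h^*)=h^*$, so $h^*_{t+T}=h^*_t$ for all $t$, and forward invariance keeps the whole orbit in $\mathcal{C}$. For uniqueness, any $T$-periodic trajectory in $\mathcal{C}$ aligned with the phase of the drive has its time-$0$ state fixed by $S$. That state must therefore equal $h^*$, and the trajectory coincides with $(h^*_t)$.

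\textbf{Step 4: exponential convergence.} Take any $h_0\in\mathcal{C}$ and write $t=kT+r$ with $0\le r<T$. Contractivity of each step gives
\[
\|h_t-h^*_t\|\le\lambda^{r}\|h_{kT}-h^*\|\le\lambda^{r}\lambda^{kT}\|h_0-h^*\|=\lambda^{t}\|h_0-h^*_0\|.
\]
More directly, $\|h_t-h^*_t\|\le\lambda^t\|h_0-h^*_0\|$ follows by induction using $\|F_t(a)-F_t(b)\|\le\lambda\|a-b\|$. So the distance to the periodic orbit decays at rate $\lambda^t$.

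Finally, I would remark on the scope of the result. The attracting orbit is a limit cycle of the conditioned, periodically driven neural subsystem. It is a limit cycle of the hybrid system only when the closed-loop consistency of the input sequence holds, which the paper checks empirically through the action-consistency filter.
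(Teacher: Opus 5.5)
Your proposal is correct and takes the same route as the paper: apply the Banach fixed-point theorem to the stroboscopic map $S$, lift the fixed point $h^*$ to a $T$-periodic orbit, and read off exponential convergence from the contraction constant $\lambda^T$. Where you differ, you improve on the paper's argument. The paper asserts that $S$ is a contraction on the Banach space $\mathbb{R}^n$, which its Lemma does not give, so three of your additions are repairs rather than departures: forward invariance gives $S(\mathcal{C})\subseteq\mathcal{C}$; the all-time bound $\|h_t-h^*_t\|\le\lambda^t\|h_0-h^*_0\|$ covers every $t$, not just multiples of $T$; and your explicit closedness hypothesis is genuinely needed. For the last point, take $f_\theta(u,h)=h/2$ on $\mathcal{C}=(0,1]$: it satisfies Definition~\ref{def:contractive_region} and every trajectory stays in $\mathcal{C}$, yet there is no periodic orbit in $\mathcal{C}$. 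So the paper's ``trajectory stays in $\mathcal{C}$'' assumption does not substitute for closedness, and your remark equating the two is loose.
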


\begin{proof}
\textbf{Existence and Uniqueness:} Since $S$ is a contraction mapping on the Banach space $\mathbb{R}^n$ (Lemma 1), by the Banach Fixed-Point Theorem, $S$ admits a unique fixed point $h^*$. This fixed point generates the periodic orbit $h_{t+1}^* = f(u_t, h_t^*)$ with $h_0^* = h^*$, satisfying $h_T^* = S(h^*) = h^*$.

\textbf{Global Convergence:} The Banach Fixed-Point Theorem further guarantees that for any initial $h_0$, the sequence of states at the start of each episode, $h_{kT} = S^k(h_0)$, converges to $h^*$ as $k \to \infty$. specifically:
\begin{equation}
\|h_{kT} - h^*\| \le \lambda^{kT} \|h_0 - h^*\|.
\end{equation}
This implies exponential convergence of the dynamical system to the unique limit cycle.
\end{proof}

\section{Empirically Observed Limit Cycles}\label{app:lc}

Figure~\ref{fig:limit_cycles_more} provides a qualitative survey of empirically observed limit cycles in Procgen, spanning 8 environments, 3 recurrent architectures (RNN, GRU, Mamba), and 2 training pipelines (PPO, evolution strategies).
Each subpanel corresponds to a fixed task--method--architecture configuration and contains three representative procedural seeds; for each seed we show the \emph{initial observation} (top strip) and the resulting PCA-projected recurrent hidden-state trajectory over one converged episode (bottom).

Across all settings, the projected trajectories exhibit consistent signatures of a stable periodic attractor in the closed-loop agent--environment dynamics: (i) a simple closed-orbit topology (clear loops rather than space-filling curves), (ii) smooth, non-self-intersecting paths indicative of evolution on a low-dimensional manifold, and (iii) periods commensurate with the episode length.
Moreover, the high-dimensional closure residual $\|\mathbf{h}_T - \mathbf{h}_0\|_2$ concentrates around $\mathcal{O}(10^{-5})$, supporting that the observed loops reflect genuine periodic orbits rather than projection artifacts (Appendix~\ref{app:acf}).

These examples are intended as qualitative support for the broader empirical regularity: after convergence, recurrent policies frequently settle into stable cyclic regimes, consistent with the theoretical prediction that dissipative dynamics under periodic forcing promote convergence to stable periodic orbits (Appendix~\ref{app:proof_limit_cycle}).

\begin{figure*}[t]
    \centering
    \includegraphics[width=0.9\textwidth]{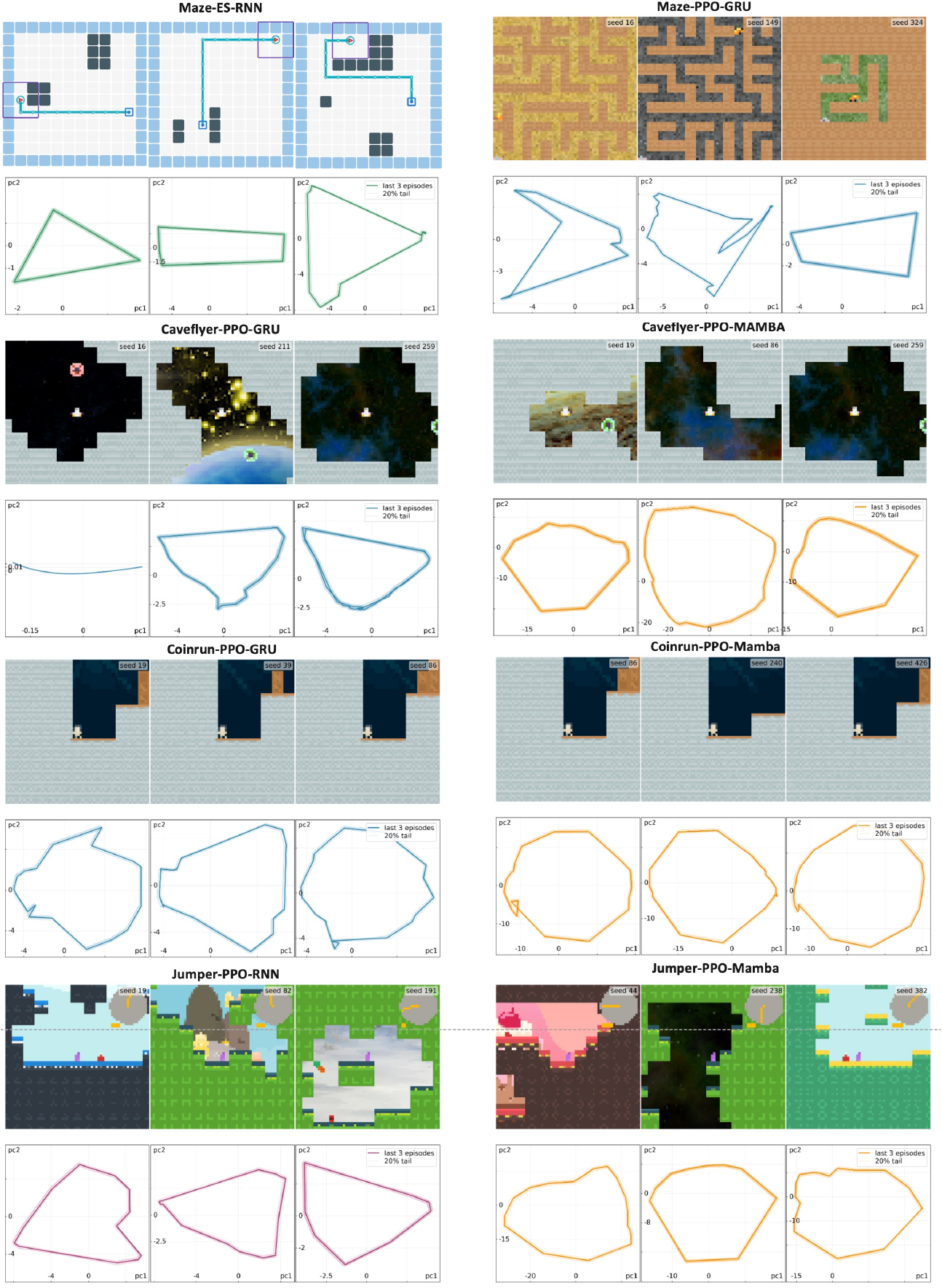}
    \caption{\textbf{Gallery of empirically observed limit cycles.}
    Each subpanel corresponds to a fixed (task, training pipeline, architecture) configuration and shows three representative procedural seeds.
    For each seed, the \textbf{top strip} shows the initial observation and the \textbf{bottom} shows the PCA-projected recurrent hidden-state trajectory over a converged episode.
    Across environments and training setups, trajectories consistently form stable, smooth closed loops, with orbit geometry varying by task but remaining topologically simple.}
    \label{fig:limit_cycles_more}
\end{figure*}

\begin{figure*}[t]
    \centering
    \includegraphics[width=\textwidth]{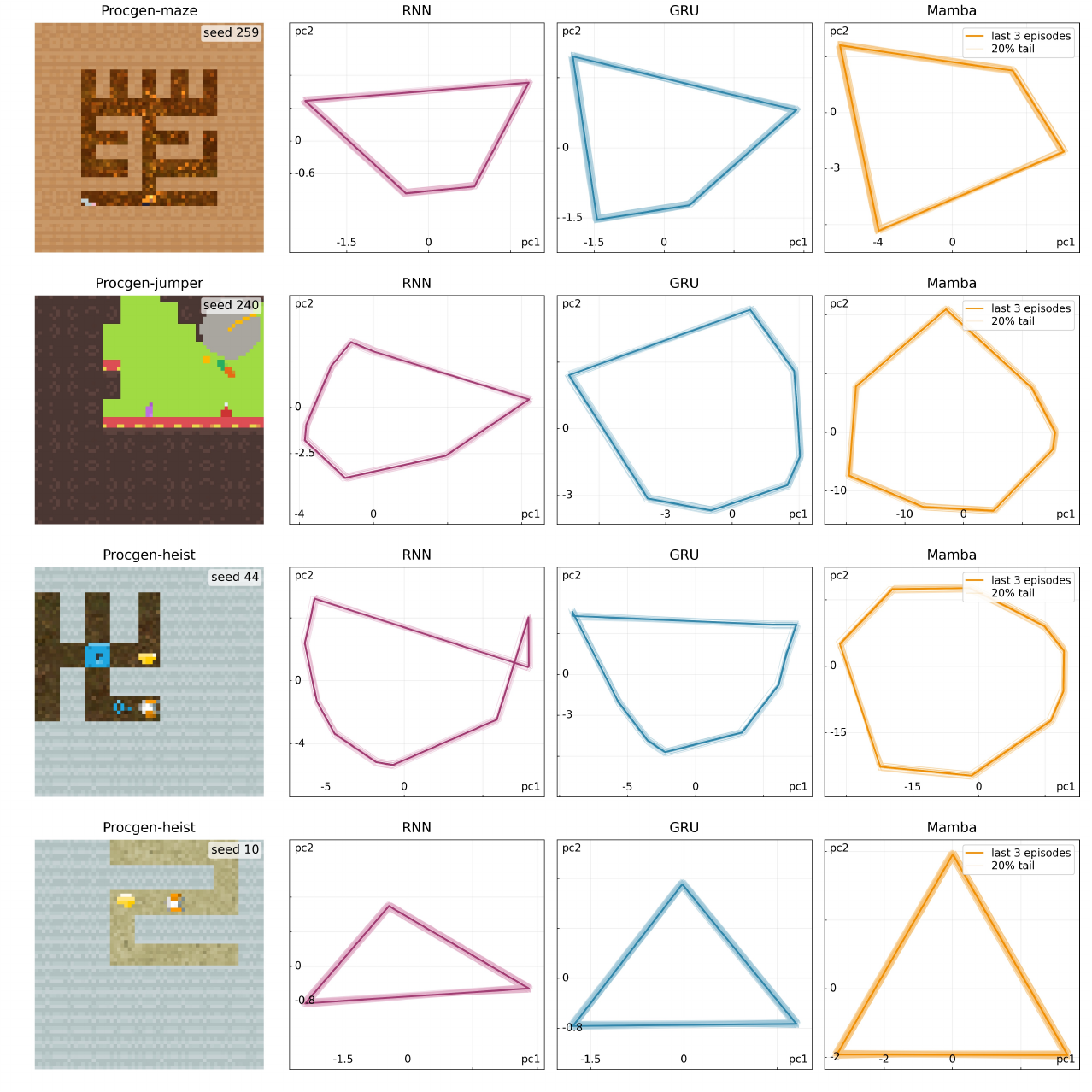}
    \caption{\textbf{Cross-architecture consistency under fixed environment and seed.}
    We fix the Procgen environment and the procedural seed, and compare the converged recurrent dynamics across different architectures.
    As in Figure~\ref{fig:limit_cycles_more}, the top strip shows the seed's initial observation and the bottom shows the PCA-projected hidden-state trajectory over an episode.
    Despite architectural differences, the resulting limit-cycle geometry is notably similar, suggesting that near-optimal closed-loop solutions induce a task-instance-conditioned attractor whose coarse geometric structure is largely architecture-invariant.
    This qualitative observation is consistent with our quantitative neural--behavioral alignment analysis (Appendix~\ref{app:cca_structures}), which reveals high-dimensional correspondences between neural manifolds and behavioral geometry across settings.}
    \label{fig:same_seed_dif_arch}
\end{figure*}

\clearpage

\section{Robustness of Limit Cycles under Hidden-State Perturbations}
\label{app:perturb}

To extend the illustrative example in Figure~\ref{fig:perturbation}, we evaluate limit-cycle stability under perturbations across many task instances (procedural seeds) and recurrent policies in both \emph{maze navigation} and \emph{Procgen}.
For each trained policy $\pi_\theta$ (parameters $\theta$ fixed) and a fixed task instance, we first record a baseline hidden-state trajectory $\{h_t^{\mathrm{base}}\}_{t=0}^{T}$ in the converged regime, then inject an additive perturbation at a chosen time step $t^*$:
\begin{equation}
    \tilde{h}_{t^*} = h_{t^*}^{\mathrm{base}} + \epsilon\,\delta, \qquad \delta \sim \mathcal{N}(0, I).
\end{equation}
The perturbation is applied only to the recurrent hidden state; the environment dynamics and observation stream are otherwise unchanged.
We run several perturbation variants per instance (independent draws of $\delta$) and visualize baseline and perturbed trajectories in the same PCA projection used throughout the paper (PCs fitted on the baseline trajectory).

\begin{figure*}[t]
    \centering
    \IfFileExists{fig/perturb-more.pdf}{
        \includegraphics[width=\textwidth]{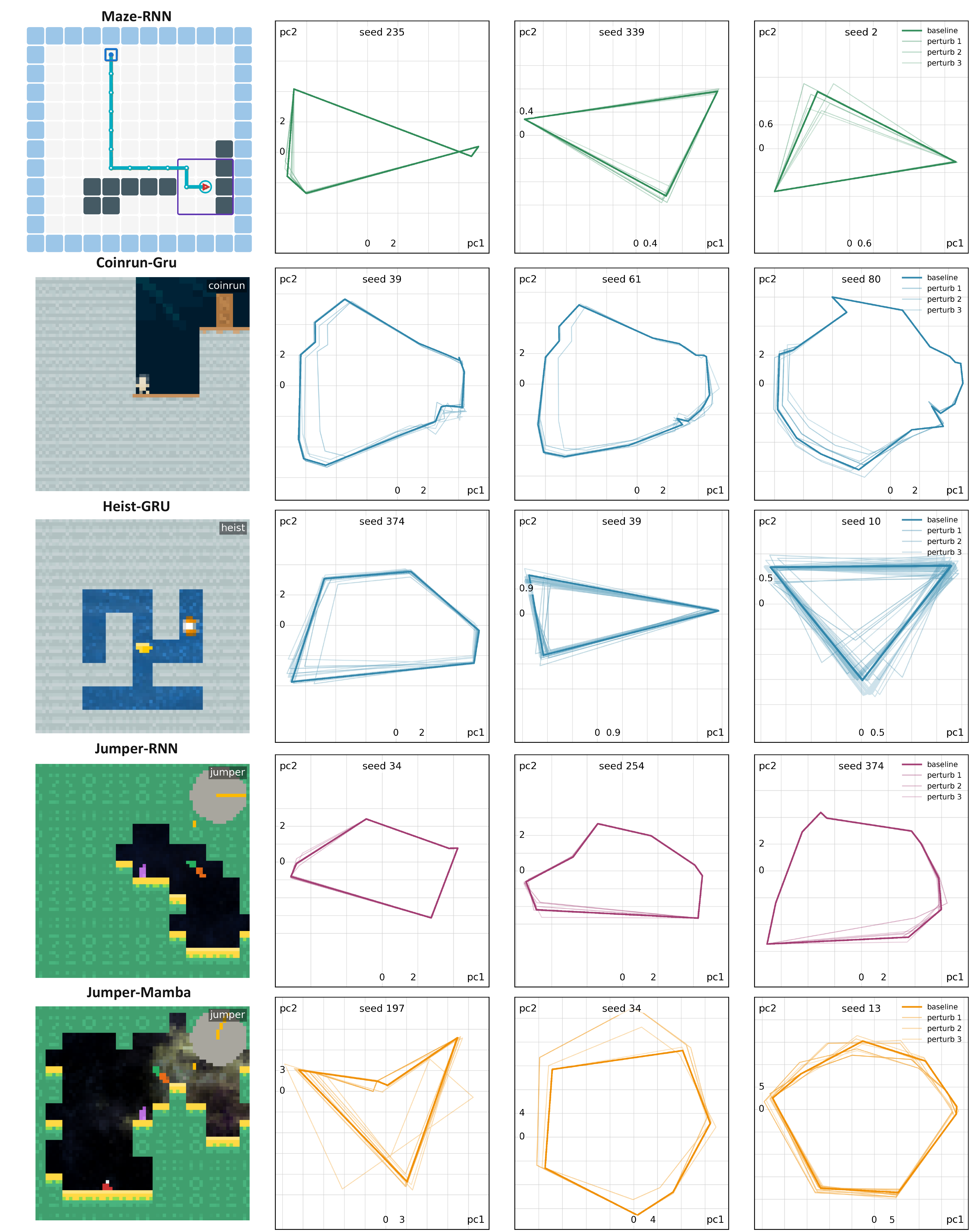}
    }{
        \fbox{\parbox{0.95\textwidth}{\centering Missing file: \texttt{fig/perturb-more.pdf}}}
    }
    \caption{Perturbation robustness across maze navigation and Procgen.
    Each panel shows PCA-projected hidden-state trajectories for a trained policy on a fixed task instance.
    The darker curve is the unperturbed baseline cycle; lighter curves are trajectories after hidden-state perturbations (policy weights fixed).
    Across a broad set of environments and seeds, perturbations induce only transient deviations and trajectories contract back to the same task-conditioned limit cycle, consistent with the contractive dynamics in Theorem~\ref{thm:main_proof}.}
    \label{fig:perturb_more}
\end{figure*}

\paragraph{Boundary cases}
We also observe counterexamples where perturbations lead to substantially longer transients before trajectories approach a stable cyclic regime.
Figure~\ref{fig:perturb_negative} highlights two such Procgen environments, \textit{Bossfight} and \textit{Bigfish}.
Compared to maze navigation, these games include richer non-stationary interaction dynamics (e.g., moving adversaries, projectiles, prey) whose evolution is not fully controlled by the agent and can dominate the visual stream; equivalently, the closed-loop system is driven by higher-dimensional exogenous degrees of freedom.
Moreover, successful behavior in \textit{Bossfight} and \textit{Bigfish} is often primarily reactive---requiring continual adjustment to the instantaneous configuration of enemies/projectiles/fish---which further weakens the emergence of a short, repeatable observation--action pattern (periodic forcing) and yields longer excursions in hidden-state space.
Nevertheless, partial re-stabilization is still visible for some seeds (e.g., Bossfight seed 34; Bigfish seed 122), suggesting that contractive basins may exist locally even when global re-convergence is slower.

\begin{figure*}[t]
    \centering
    \IfFileExists{fig/perturb-negative.pdf}{
        \includegraphics[width=\textwidth]{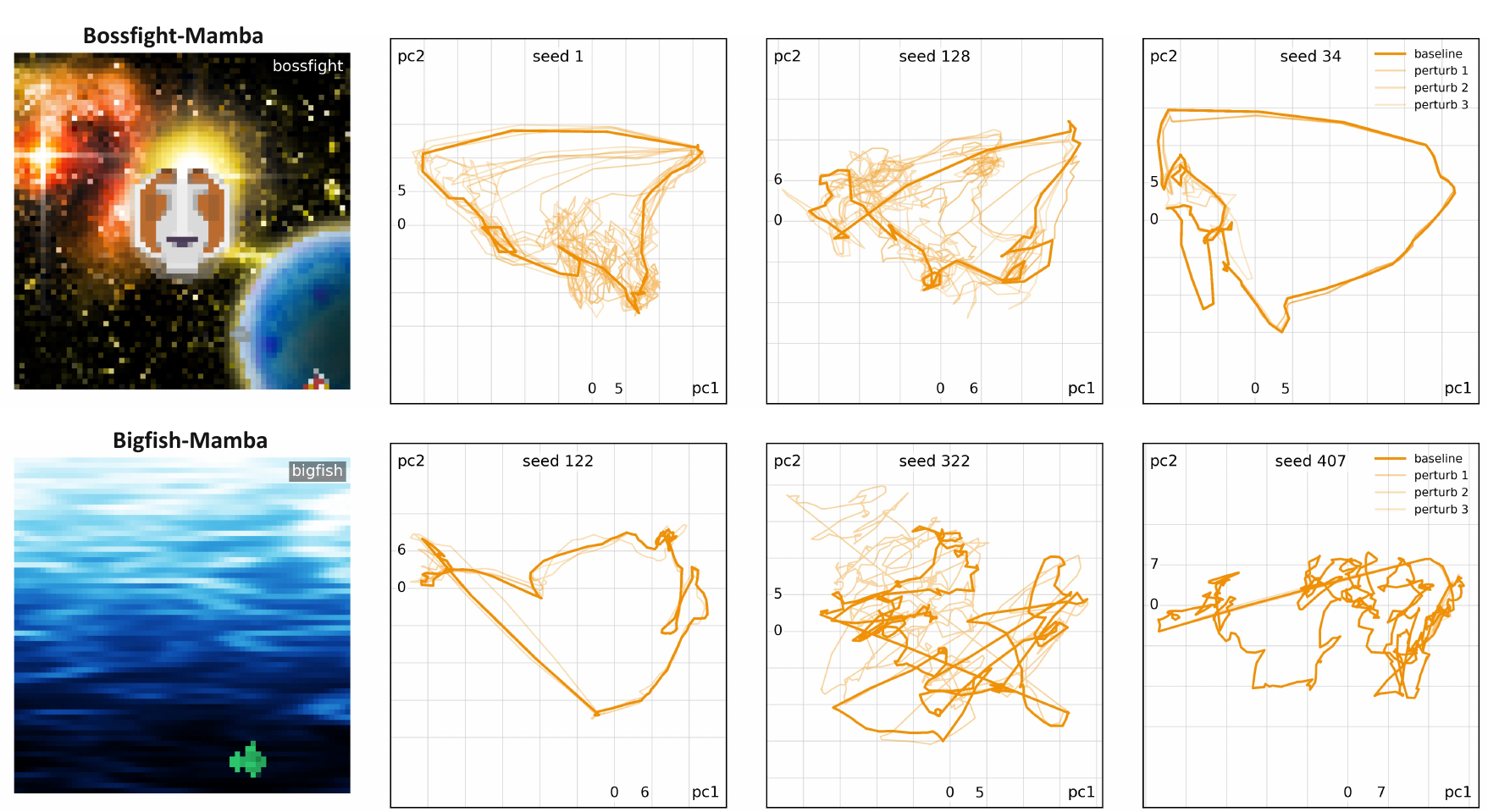}
    }{
        \fbox{\parbox{0.95\textwidth}{\centering Missing file: \texttt{fig/perturb-negative.pdf}}}
    }
    \caption{Extended transients under perturbation in Procgen.
    In \textit{Bossfight} and \textit{Bigfish}, hidden-state perturbations can induce prolonged excursions before trajectories approach a stable regime.
    These environments exhibit stronger non-stationarity due to moving entities and reactive interactions, which can disrupt the quasi-periodic structure that supports rapid re-convergence in maze navigation.
    Partial stabilization is still visible for some seeds (e.g., Bossfight seed 34; Bigfish seed 122).}
    \label{fig:perturb_negative}
\end{figure*}

\clearpage

\section{Limit Cycle Sampling via PKD and Action-Consistency Filter (ACF)}
\label{app:acf}

While the Periodically-Kicked Drive (PKD) mechanism effectively reveals the distribution of \textit{potential} limit cycles within the recurrent policy's state space, not all such cycles are functionally relevant. A periodic input stream might entrain the network into a stable "ghost orbit" that is mathematically stable under the forced input but generates actions inconsistent with the input itself. Such an orbit would break the causal loop in a real interaction.

To isolate the specific neural structure responsible for a given behavior, we employ the \textbf{Action-Consistency Filter (ACF)}. This procedure verifies the \textbf{closed-loop self-sustainment} condition: a valid neural limit cycle must produce the exact sequence of actions required to generate the observation stream that drives it.

The filtering pipeline proceeds as follows:
\begin{enumerate}
    \item \textbf{PKD Extraction:} We first record a ground-truth trajectory of length $T$ (the period) from the agent's interaction with the environment, extracting the cyclic observation sequence $\mathcal{O}_{seq}$ and the target action sequence $\mathcal{A}_{target}$.
    \item \textbf{Large-Scale Probing:} We initialize a large population of random hidden states ($N=100,000$) to broadly sample the basins of attraction in the neural state space.
    \item \textbf{Entrainment:} We apply the observation sequence $\mathcal{O}_{seq}$ as a repeated external drive to these states for a sufficient warmup period (e.g., 1,000 steps). Due to the contractive nature of the trained network, most trajectories will collapse onto stable manifolds.
    \item \textbf{Convergence \& Consistency Check:} After entrainment, we first discard trajectories that have not converged to a periodic orbit. To address concerns regarding high-dimensional closure, we explicitly measure the \textbf{Closure Error} $\delta = \|\mathbf{h}_T - \mathbf{h}_0\|_2$. We find that for retained candidates, $\delta$ is concentrated at the magnitude of $10^{-5}$, confirming that these are genuine periodic orbits in the high-dimensional space, not merely projection artifacts.
    \item \textbf{Filtering:} Finally, we check the readout constraint: does the neural state at each step of the cycle decode to the correct action in $\mathcal{A}_{target}$? Only cycles satisfying both convergence and action consistency are retained.
\end{enumerate}

The detailed algorithm is provided in Algorithm~\ref{alg:acf}.

\begin{algorithm}[h]
\caption{Action-Consistency Filter (ACF)}
\label{alg:acf}
\begin{algorithmic}[1]
\STATE {\bfseries Input:} Recurrent Policy $f_\theta$, Readout $\pi_\theta$
\STATE {\bfseries Input:} Ground Truth Obs Sequence $\mathcal{O} = \{o_1, \dots, o_T\}$, Target Actions $\mathcal{A} = \{a_1, \dots, a_T\}$
\STATE {\bfseries Hyperparameters:} Number of seeds $N=100000$, Warmup steps $K=1000$, Convergence thresh $\epsilon$
\STATE {\bfseries Output:} Set of Validated Limit Cycles $\mathbf{C}_{valid}$

\STATE Initialize $\mathbf{C}_{valid} \leftarrow \emptyset$
\STATE Generate $N$ random initial hidden states $\{h^{(i)}_0\}_{i=1}^N \sim \mathcal{N}(0, I)$

\FOR{each candidate state $h^{(i)}_0$ in parallel}
    \STATE $h \leftarrow h^{(i)}_0$
    \STATE \textit{// Phase 1: Entrainment (Apply PKD)}
    \FOR{step $k = 1$ to $K$}
        \STATE $t \leftarrow (k-1) \pmod T + 1$
        \STATE $h \leftarrow f_\theta(h, o_t)$
    \ENDFOR
    
    \STATE \textit{// Phase 2: Capture Cycle \& Check Convergence}
    \STATE $h_{start} \leftarrow h$
    \STATE $\text{Cycle} \leftarrow []$
    \STATE $\text{Actions} \leftarrow []$
    \FOR{step $t = 1$ to $T$}
        \STATE $a_{pred} \leftarrow \arg\max \pi_\theta(h)$
        \STATE $\text{Cycle.append}(h)$
        \STATE $\text{Actions.append}(a_{pred})$
        \STATE $h \leftarrow f_\theta(h, o_t)$
    \ENDFOR
    
    \STATE \textit{// Check 1: Is it a closed loop? (Periodicity)}
    \STATE $\delta_{closure} \leftarrow \|h - h_{start}\|_2$
    \IF{$\delta_{closure} > \epsilon$}
        \STATE \textbf{continue} 
    \ENDIF
    
    \STATE \textit{// Check 2: Action Consistency}
    \IF{$\text{Actions} == \mathcal{A}$}
        \STATE $\mathbf{C}_{valid}.\text{add}(\text{Cycle})$
    \ENDIF
\ENDFOR

\STATE {\bfseries Return} $\mathbf{C}_{valid}$
\end{algorithmic}
\end{algorithm}

\begin{figure*}[t]
    \centering
    \includegraphics[width=\textwidth]{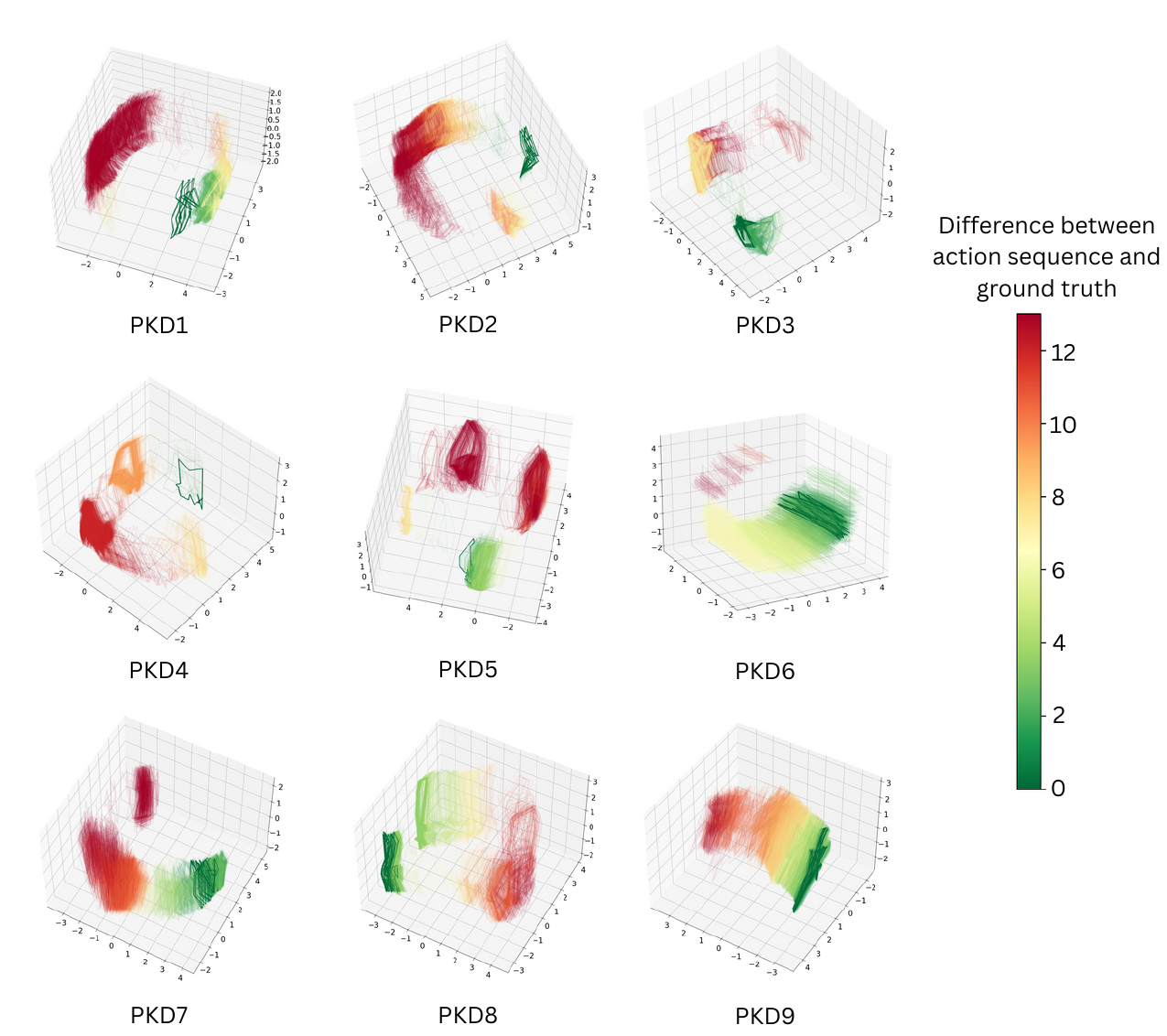}
    \caption{\textbf{Visualizing the landscape of entrained attractors under Periodically-Kicked Drive.} 
    We probe the neural state space of a GRU policy by initializing \textbf{100,000} random hidden states and driving them with a fixed, task-derived PKD sequence. The resulting converged orbits are projected via PCA. 
    \textbf{Color Coding:} Orbits are colored by their \textbf{Action Consistency Error}—the discrepancy between the policy's open-loop action outputs and the ground-truth actions required to sustain the cycle (Green: Low Error/Consistent; Red: High Error/Inconsistent).
    \textbf{Insight:} While the periodic drive entrains a vast number of states into stable periodic orbits (the dissipative "ghost orbits" shown in red), only specific, tightly clustered regions (green) satisfy the causal self-sustainment condition. This visualizes that periodicity provided by resets/PKD is necessary but not sufficient; the learned internal structure must also align to close the sensorimotor loop.}
    \label{fig:acf_clusters}
\end{figure*}

\textbf{Distinguishing Learned Dynamics from Trivial Resets.} The emergence of these self-sustaining limit cycles is \textit{not} a trivial consequence of the episodic task structure. While environmental resets inherently impose periodicity on the observation stream, potentially entraining the neural state into a passive loop, our ACF analysis reveals that this external drive is insufficient for closed-loop stability.

As visualized in Figure~\ref{fig:acf_clusters}, when we probe the state space with 100,000 initializations under a fixed PKD, we find a vast landscape of mathematically stable attractors. However, the majority of these are "ghost orbits" (red) that fail the action-consistency check. They are merely entrained echoes of the input forcing. Valid limit cycles (green) appear only as distinct, isolated clusters where the policy's internal dynamics precisely align with the external drive to produce the correct actions. In a real interaction, any state falling into the inconsistent region would immediately diverge from the cycle after a reset, breaking the loop. Thus, the limit cycles we observe are specific, learned dynamical structures, not artifacts of environmental periodicity.

\clearpage

\section{Behavioral Embedding: Behavioral Potential Field (BPF)}

We introduce the \textbf{Behavioral Potential Field (BPF)} as a method to encode variable-length navigation paths into a fixed-resolution scalar field. This representation allows us to quantify geometric similarities---such as path shape, direction, and detour magnitude---using standard vector space metrics.

\textbf{Geometric Intuition and Validation.} 
Why does the Euclidean distance between two scalar fields represent a meaningful metric for trajectory similarity? Conceptually, BPF transforms a sparse sequence of coordinates into a dense spatial distribution. When we compute the $L_2$ distance between two BPFs, we are calculating the integrated difference in their potential intensities. This approximates a transport-cost metric (similar to the Wasserstein distance) without the computational overhead.

To verify that this metric aligns with human intuition, we analyzed two distinct scenarios using synthetic trajectories, as shown in Figure~\ref{fig:ridge_representation_distances}:

\begin{itemize}
    \item \textbf{Spatial Separation (Figure~\ref{fig:ridge_representation_distances}, Top):} We compared three non-overlapping paths. Visually, \texttt{trj1} is spatially closer to \texttt{trj2} than to the distant \texttt{trj3}. The BPF Euclidean distance captures this relationship accurately: $d(\text{trj1}, \text{trj2}) = 4.38$, which is significantly smaller than $d(\text{trj1}, \text{trj3}) = 7.35$.
    \item \textbf{Shape Deformation (Figure~\ref{fig:ridge_representation_distances}, Bottom):} For paths that overlap (sharing start/end points), the metric distinguishes between minor deviations and major detours. A slight deviation (\texttt{trj2}) yields a distance of $d=2.32$ from the reference \texttt{trj1}, while a larger detour (\texttt{trj3}) yields $d=4.67$.
\end{itemize}

These results confirm that the BPF embedding preserves both the topological shape and spatial displacement of behaviors in a linear metric space, providing a robust basis for CCA alignment.

\begin{figure}[htbp]
    \centering
    \includegraphics[width=\linewidth]{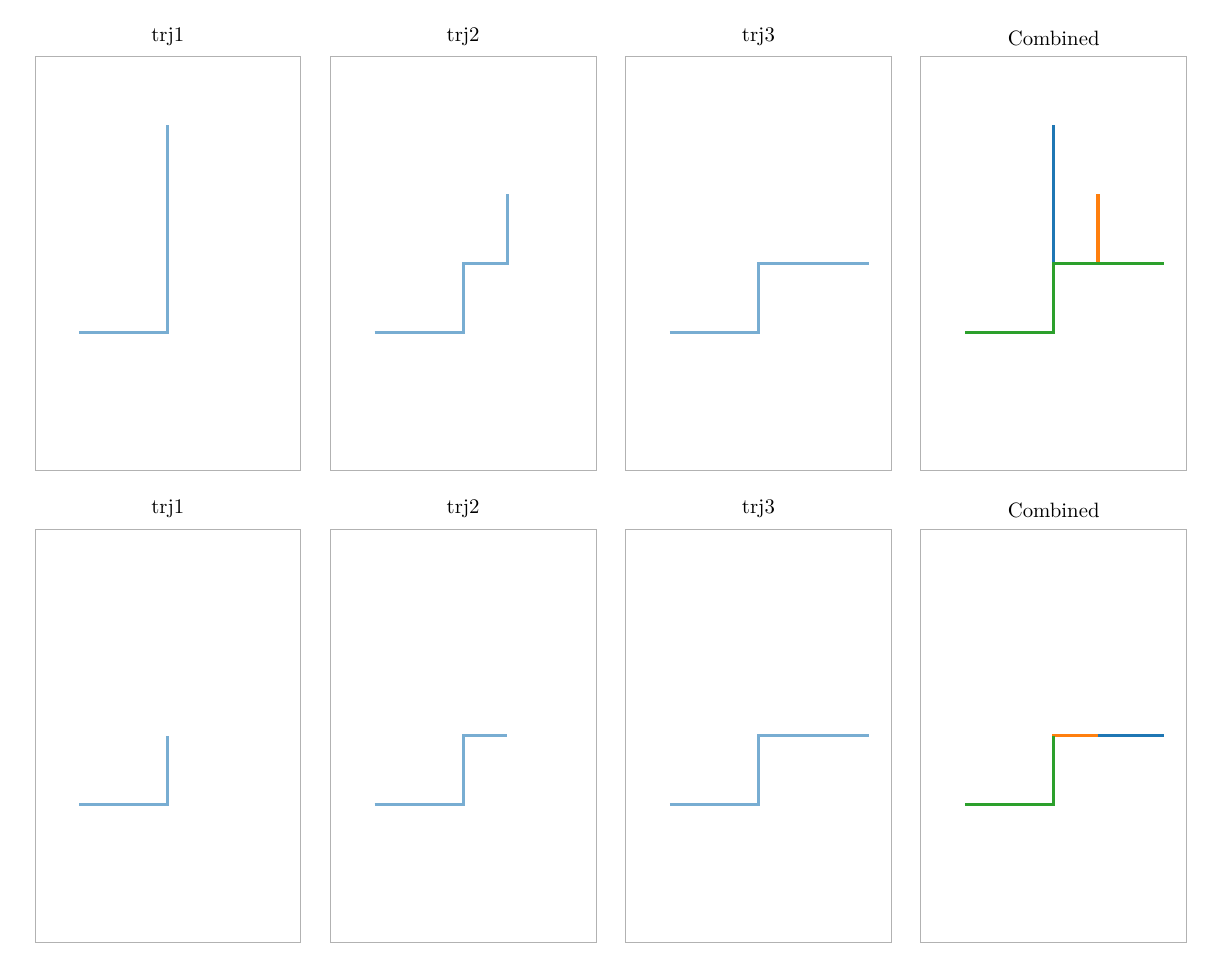}
    \caption{\textbf{Validation of the BPF metric against geometric intuition.}
    \textbf{1. Non-overlapping scenario (Top):} The computed Euclidean distances in BPF space align with the visual spatial separation: $d(\text{trj1}, \text{trj2}) = 4.38 < d(\text{trj1}, \text{trj3}) = 7.35$.
    \textbf{2. Overlapping scenario (Bottom):} For trajectories with varying deformations, the metric reflects the magnitude of the detour: the distance between the reference path and a minor deviation ($d(\text{trj1}, \text{trj2}) = 2.32$) is strictly less than that of a major detour ($d(\text{trj1}, \text{trj3}) = 4.67$).}
    \label{fig:ridge_representation_distances}
\end{figure}

\textbf{Construction Algorithm.}
The formal construction process is detailed in Algorithm~\ref{alg:bpf}. We define a fixed grid (e.g., $21 \times 21$) representing the maze area. For every point $p_k$ in the trajectory, we generate a ``radiance field'' where intensity decays linearly with Euclidean distance. To aggregate the full trajectory, we apply a \textbf{maximum-value fusion} rule. This ensures that the resulting field captures the continuous ``ridge'' of the path without saturation issues common in summation methods.

\begin{algorithm}[tb]
\caption{Construction of Behavioral Potential Field (BPF)}
\label{alg:bpf}
\begin{algorithmic}[1]
\STATE {\bfseries Input:} Trajectory sequence $\tau = \{p_1, p_2, \dots, p_L\}$, Grid dimensions $H \times W$
\STATE {\bfseries Hyperparameters:} Top intensity $\alpha$, Base intensity $\beta$, Effective Radius $R_{\text{eff}}$
\STATE {\bfseries Output:} Behavioral Field Matrix $\mathbf{F}$

\STATE Initialize global field $\mathbf{F} \leftarrow \mathbf{0}_{H \times W}$
\STATE Generate coordinate grids $\mathbf{X}, \mathbf{Y}$ for the $H \times W$ space

\FOR{each point $p_k = (x_k, y_k)$ in $\tau$}
    \STATE \textit{// 1. Compute Euclidean distance from current point to all pixels}
    \STATE $\mathbf{D} \leftarrow \sqrt{(\mathbf{X} - x_k)^2 + (\mathbf{Y} - y_k)^2}$
    
    \STATE \textit{// 2. Generate linear radiance field for this point}
    \STATE $\mathbf{I}_{\text{local}} \leftarrow (\alpha - \beta) \cdot \frac{R_{\text{eff}} - \mathbf{D}}{R_{\text{eff}}} + \beta$
    
    \STATE \textit{// 3. Apply truncation (intensity is 0 beyond effective radius)}
    \STATE $\mathbf{I}_{\text{local}} \leftarrow \text{where}(\mathbf{D} < R_{\text{eff}}, \mathbf{I}_{\text{local}}, 0)$
    
    \STATE \textit{// 4. Max-Fusion: Update global field}
    \STATE $\mathbf{F} \leftarrow \max(\mathbf{F}, \mathbf{I}_{\text{local}})$
\ENDFOR

\STATE {\bfseries Return} $\mathbf{F}$
\end{algorithmic}
\end{algorithm}

\textbf{Hyperparameter Stability.}
While the BPF construction involves specific choices for the effective radius ($R_{\text{eff}}$) and decay profile, the resulting geometric embedding is remarkably robust. The preservation of topological structure does not rely on precise parameter tuning. We provide a detailed empirical verification of this stability in Appendix~\ref{app:bpf_robustness}, where we demonstrate that high canonical correlations persist across a broad sweep of hyperparameter configurations.

\clearpage

\section{Relating Neural and Behavioral Geometries via CCA}\label{app:neu_beh_cca}

To systematically examine the structural correspondence between the agent's internal memory and its physical behavior, we employ \textbf{Canonical Correlation Analysis (CCA)}. Unlike standard correlation which operates on scalar variables, CCA identifies linear relationships between two high-dimensional datasets.

\textbf{Optimization Objective.}
Let $\mathbf{X} \in \mathbb{R}^{N \times d_x}$ denote the dataset of neural hidden states (e.g., GRU vectors) and $\mathbf{Y} \in \mathbb{R}^{N \times d_y}$ denote the corresponding Behavioral Potential Fields (BPFs), where $N$ is the number of samples. CCA seeks pairs of projection vectors $u \in \mathbb{R}^{d_x}$ and $v \in \mathbb{R}^{d_y}$ such that the correlation between the projected variables (canonical variates) is maximized:

\begin{equation}
\label{eq:cca_obj}
\rho = \max_{u, v} \frac{u^T \Sigma_{XY} v}{\sqrt{u^T \Sigma_{XX} u} \sqrt{v^T \Sigma_{YY} v}}
\end{equation}

where $\Sigma_{XX}$ and $\Sigma_{YY}$ are the covariance matrices of neural and behavioral data respectively, and $\Sigma_{XY}$ is the cross-covariance matrix. This optimization is solved iteratively to find orthogonal pairs of directions $(u_i, v_i)$ corresponding to descending correlation values $\rho_i$.

\textbf{Analysis Workflow.}
The complete pipeline is formalized in Algorithm~\ref{alg:cca_pipeline}. We first apply Principal Component Analysis (PCA) to both the neural states and the BPFs. This preprocessing step serves two purposes: it reduces dimensionality to prevent overfitting and acts as a whitening transform. The CCA algorithm then operates on these compressed representations, yielding three key outputs:
\begin{itemize}
    \item \textbf{Canonical Correlations ($\rho$):} A spectrum of scalar values quantifying the strength of alignment for each mode. High values in the first $k$ modes indicate a shared geometric structure of dimensionality $k$.
    \item \textbf{Canonical Vectors ($U, V$):} The projection weights that define \textit{which} features (e.g., specific neurons or spatial regions in BPF) contribute most to the alignment.
    \item \textbf{Canonical Variates:} The coordinates of the data in the shared canonical space, allowing us to visualize and compare the manifolds directly (as seen in Section 5).
\end{itemize}

\begin{algorithm}[tb]
\caption{Neural-Behavioral Alignment Pipeline}
\label{alg:cca_pipeline}
\begin{algorithmic}[1] 
    \STATE {\bfseries Input:} Neural dataset $\mathbf{H} \in \mathbb{R}^{N \times D_{neuron}}$, Behavioral dataset $\mathbf{F} \in \mathbb{R}^{N \times D_{pixel}}$
    \STATE {\bfseries Hyperparameters:} PCA components $k_x, k_y$, CCA components $k_{cca}$

    \STATE 
    \STATE \textit{// Stage 1: Preprocessing \& Basis Rotation}
    \STATE $\mathbf{H}_{pca} \leftarrow \text{PCA}(\mathbf{H}, \text{n\_components}=k_x)$
    \STATE $\mathbf{F}_{pca} \leftarrow \text{PCA}(\mathbf{F}, \text{n\_components}=k_y)$

    \STATE \textit{// Standardize to zero mean and unit variance}
    \STATE $\tilde{\mathbf{H}} \leftarrow \text{StandardScaler}(\mathbf{H}_{pca})$
    \STATE $\tilde{\mathbf{F}} \leftarrow \text{StandardScaler}(\mathbf{F}_{pca})$

    \STATE
    \STATE \textit{// Stage 2: Canonical Correlation Analysis}
    \STATE Solve CCA: find $U, V$ maximizing Eq.~\ref{eq:cca_obj} on $(\tilde{\mathbf{H}}, \tilde{\mathbf{F}})$

    \STATE
    \STATE \textit{// Stage 3: Project to Canonical Space}
    \STATE $\mathbf{Z}_{neural} \leftarrow \tilde{\mathbf{H}} U$ \hfill \textit{\{Neural Canonical Variates\}}
    \STATE $\mathbf{Z}_{behavior} \leftarrow \tilde{\mathbf{F}} V$ \hfill \textit{\{Behavioral Canonical Variates\}}
    \STATE $\rho \leftarrow \text{diag}(\text{corr}(\mathbf{Z}_{neural}, \mathbf{Z}_{behavior}))$

    \STATE {\bfseries Return} Canonical Correlations $\rho$, Variates $\mathbf{Z}_{neural}, \mathbf{Z}_{behavior}$
\end{algorithmic}
\end{algorithm}

\clearpage

\section{CCA Aligned Structures}
\label{app:cca_structures}

In this section, we provide expanded visualizations of the Canonical Correlation Analysis (CCA) results to validate the structural isomorphism between the agent's internal memory and its physical behavior. To ensure the statistical robustness of these geometric portraits, we conducted a large-scale sampling procedure: for each experimental setting (e.g., Jumper-PPO-Mamba), we harvested \textbf{20,000} distinct task instances and their corresponding neural limit cycles.

We project both the neural hidden states and the behavioral potential fields (BPFs) of these \textbf{20,000} samples into their shared canonical space. Figures~\ref{fig:cca_angle_app} and~\ref{fig:cca_distance_app} display the results across different experimental configurations. In each panel, we visualize the geometry spanned by the first 9 Canonical Modes (CM1--CM9). The \textbf{top row} of each subplot represents the manifold structure in the \textbf{neural state space}, while the \textbf{bottom row} represents the corresponding structure in the \textbf{behavioral space}.

\begin{figure*}[htbp]
    \centering
    \includegraphics[width=0.85\textwidth]{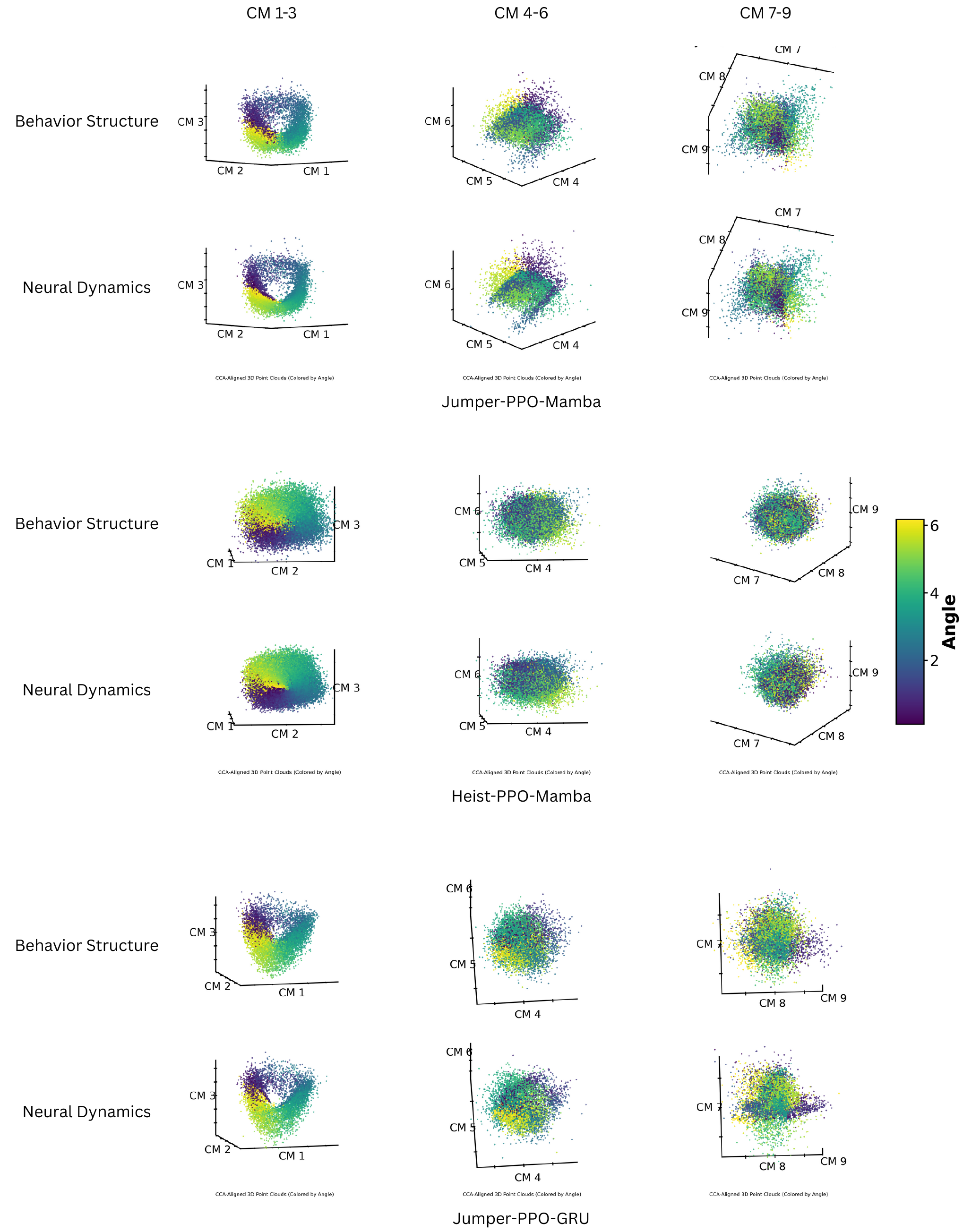}
    \caption{\textbf{CCA-aligned manifolds colored by Cyclic Phase (Angle).} 
    This figure visualizes the topological alignment between neural and behavioral manifolds for a population of \textbf{20,000} sampled limit cycles per task setting. 
    We show the projections onto the first 9 Canonical Modes. 
    The data points are colored by their \textbf{angular phase} along the limit cycle. 
    The similarity in color gradients between the neural projections (top rows) and behavioral projections (bottom rows) across multiple dimensions (CM1--CM9) indicates that the recurrent policy maintains a high-dimensional representation that topologically mirrors the cyclic phase of the physical execution, consistent across the large-scale dataset.}
    \label{fig:cca_angle_app}
\end{figure*}

\begin{figure*}[htbp]
    \centering
    \includegraphics[width=0.85\textwidth]{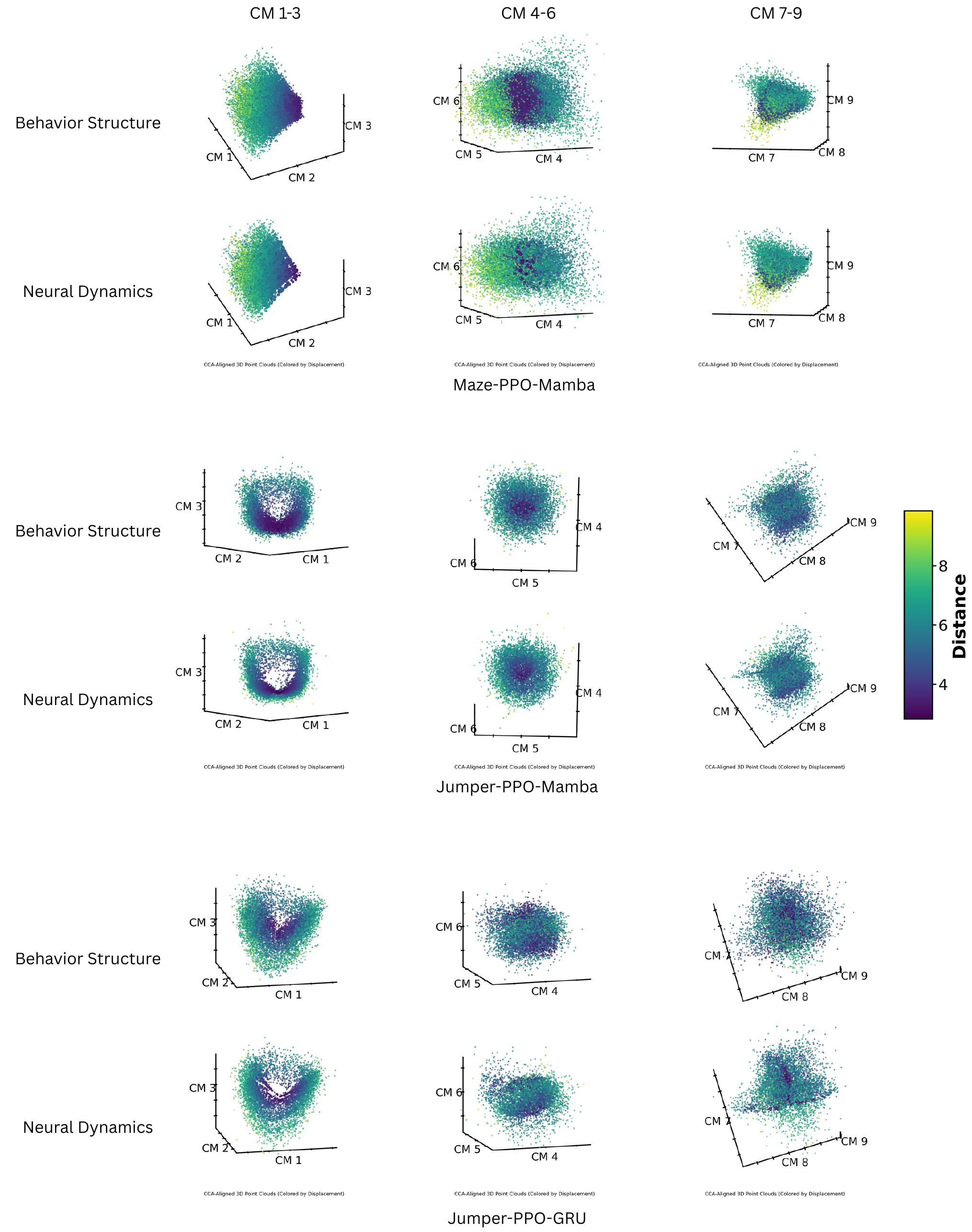}
    \caption{\textbf{CCA-aligned manifolds colored by Trajectory Displacement.} 
    This figure visualizes the metric alignment between neural and behavioral manifolds, again utilizing the \textbf{20,000} samples per setting.
    The same manifolds as in Figure~\ref{fig:cca_angle_app} are now colored by the \textbf{physical length (displacement)} of the trajectory (e.g., from purple/short to yellow/long). 
    The consistent radial gradients observed in both the neural (top rows) and behavioral (bottom rows) canonical variates demonstrate that the neural space preserves an isometric encoding of the physical distance. 
    This suggests the agent's memory acts as a coordinate system that encodes not just "where I am in the cycle" (phase), but "which cycle I am in" (magnitude/geometry).}
    \label{fig:cca_distance_app}
\end{figure*}

\subsection{Distance Matrix Correlation Analysis}

To further verify that the high canonical correlations reflect a genuine preservation of global geometric structure (rather than local linear artifacts), we performed a Representational Similarity Analysis (RSA) comparing the relational geometries of the two spaces.

For each experimental setting, we computed the pairwise Euclidean distance matrices for both the behavioral embeddings ($D_{behav}$) and their corresponding neural limit cycle centroids ($D_{neural}$). We then flattened these distance matrices into 1D vectors and computed the Pearson correlation coefficient ($R$) between them. This metric quantifies the degree of isometry: a high correlation implies that pairs of behaviors that are distinct in the physical space are mapped to proportionally distinct locations in the neural manifold.

As shown in Figure~\ref{fig:dm_pearson}, the neural and behavioral distance matrices exhibit high similarity across different architectures and tasks:
\begin{itemize}
    \item \textbf{Maze-PPO-Mamba:} Pearson $R = 0.9542$, indicating a highly isometric mapping where neural distances tightly track physical transport costs.
    \item \textbf{Jumper-PPO-Mamba:} Pearson $R = 0.8469$.
    \item \textbf{Jumper-PPO-GRU:} Pearson $R = 0.7473$.
\end{itemize}
These results provide quantitative evidence that the neural state space functions as a structured geometric map of the agent's behavioral repertoire.

\begin{figure*}[htbp]
    \centering
    \includegraphics[width=0.7\textwidth]{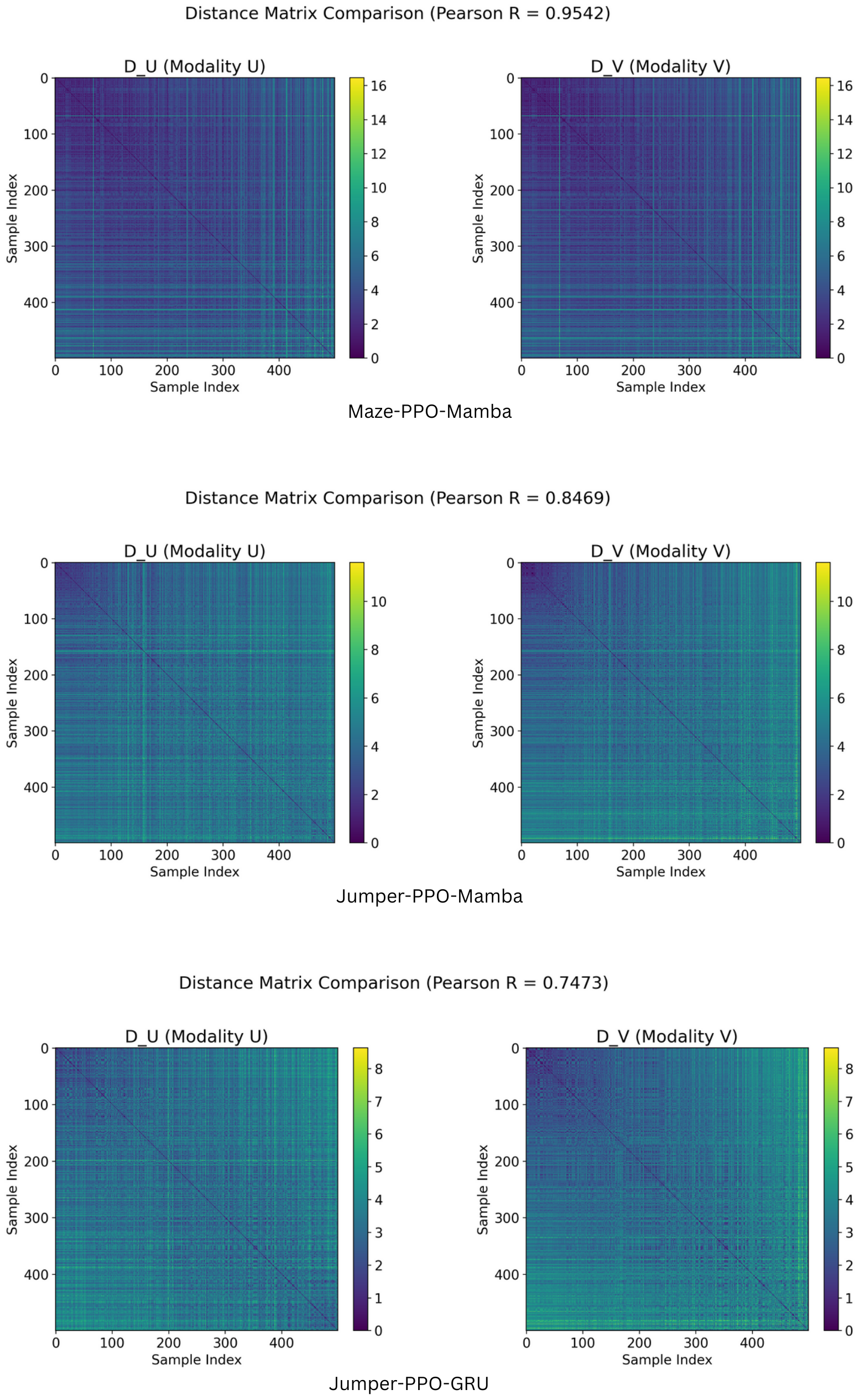}
    \caption{\textbf{Quantitative verification of structural isomorphism via Distance Matrix Comparison.}
    For three distinct experimental settings (Maze-PPO-Mamba, Jumper-PPO-Mamba, Jumper-PPO-GRU), we compare the pairwise Distance Matrices of the Neural modality ($D_U$) and Behavioral modality ($D_V$).
    The color intensity in the matrices represents the pairwise distance between sampled instances. The visual similarity between $D_U$ and $D_V$ is quantified by the Pearson correlation coefficient ($R$).
    The high correlation values ($R > 0.74$ in all cases, reaching $0.95$ for Maze tasks) confirm that the relational geometry—the relative distances between different tasks—is preserved when mapping from the behavioral space to the neural manifold.}
    \label{fig:dm_pearson}
\end{figure*}

\clearpage

\section{Control Experiment: Ruling out Dimensionality Artifacts}
\label{app:control_experiment}

A critical concern in high-dimensional geometric analysis is the potential for ``overfitting'' or statistical artifacts. Specifically, one might hypothesize that the high canonical correlations ($>0.7$ for 10+ dimensions) observed in our main results are merely a consequence of the high dimensionality of the state space ($D_{hidden}=256 \sim 512$), or simply due to the linear constraint of outputting the correct actions, rather than reflecting a learned dynamical structure.

To rigorously rule out these possibilities, we designed a \textbf{Random Control Experiment} on a large scale. We constructed a baseline using an untrained RNN with random weights. Crucially, to ensure a fair comparison, we did not simply sample random noise; we generated a dataset of \textbf{10,000} \textit{Action-Constrained Pseudo-Manifolds}. As detailed in Algorithm~\ref{alg:random_control}, for every state in a real behavioral trajectory, we generated a random neural state and projected it (via Gram-Schmidt orthogonalization) to ensure that it triggers the \textit{exact same action} through the readout layer as the optimized agent.

This procedure creates a ``Pseudo-Neural Limit Cycle'' that is functionally identical to the trained agent (it produces the same behavior) but structurally devoid of learned recurrence. We then performed the same CCA pipeline between these \textbf{10,000} pseudo-cycles and their corresponding behavioral BPFs.

\begin{algorithm}[tb]
\caption{Generation of Action-Constrained Pseudo-Manifolds}
\label{alg:random_control}
\begin{algorithmic}[1]
\STATE {\bfseries Input:} Target behavioral action sequence $A = \{a_1, \dots, a_T\}$, Readout Matrix $W_{out}$, Hidden Dimension $D$
\STATE {\bfseries Output:} Pseudo-Neural Trajectory $H_{pseudo}$

\STATE Initialize $H_{pseudo} \leftarrow []$
\FOR{$t = 1$ to $T$}
    \STATE \textit{// 1. Generate isotropic random noise}
    \STATE $v_{rand} \sim \mathcal{N}(0, I_D)$
    
    \STATE \textit{// 2. Identify target action vector and non-target vectors}
    \STATE $w_{target} \leftarrow W_{out}[a_t]$
    \STATE $W_{other} \leftarrow W_{out}[\setminus a_t]$
    
    \STATE \textit{// 3. Apply Gram-Schmidt / Projection to enforce action constraint}
    \STATE \textit{// Project noise into a subspace where $w_{target}$ dominates $W_{other}$}
    \STATE $v_{constrained} \leftarrow \text{OrthogonalizeAndScale}(v_{rand}, w_{target}, W_{other})$
    
    \STATE \textit{// Verify constraint: $\arg\max(W_{out} v_{constrained}) == a_t$}
    \STATE $H_{pseudo}.\text{append}(v_{constrained})$
\ENDFOR

\STATE {\bfseries Return} $H_{pseudo}$
\end{algorithmic}
\end{algorithm}

\textbf{Results and Discussion.}
The comparison, based on \textbf{10,000} samples for both the control and experimental groups, yields a clear difference that supports the Isometry Hypothesis.

\begin{itemize}
    \item \textbf{Spectral Collapse vs. Sustained Alignment:} As shown in the spectral plots of Figure~\ref{fig:control_comparison_random} and Figure~\ref{fig:control_comparison_gru}, the canonical correlations for the \textbf{10,000} random control samples collapse rapidly after the first 3 dimensions (likely corresponding to the trivial degrees of freedom required to code X/Y/Action). In contrast, the trained GRU sustains high correlations ($>0.7$) for over 10 dimensions across the same sample size.
    
    \item \textbf{Visual Chaos vs. Geometric Order:} The visualizations of the canonical modes (CM) further illustrate this difference. In the trained network (Figure~\ref{fig:control_comparison_gru}), the projections of the neural state (top row) and behavior (bottom row) exhibit identical, smooth geometric gradients across CM0--CM5. In the random control (Figure~\ref{fig:control_comparison_random}), however, the neural projections manifest as unstructured point clouds with no visual correspondence to the behavioral manifold, despite the actions being mathematically forced to match.
\end{itemize}

This large-scale evidence indicates that the high-dimensional alignment we observe is not a trivial artifact of action mimicry or dimensionality, but a specific, emergent property of the \textit{learned recurrent dynamics}.

\begin{figure*}[htbp]
    \centering
    \includegraphics[width=0.7\textwidth]{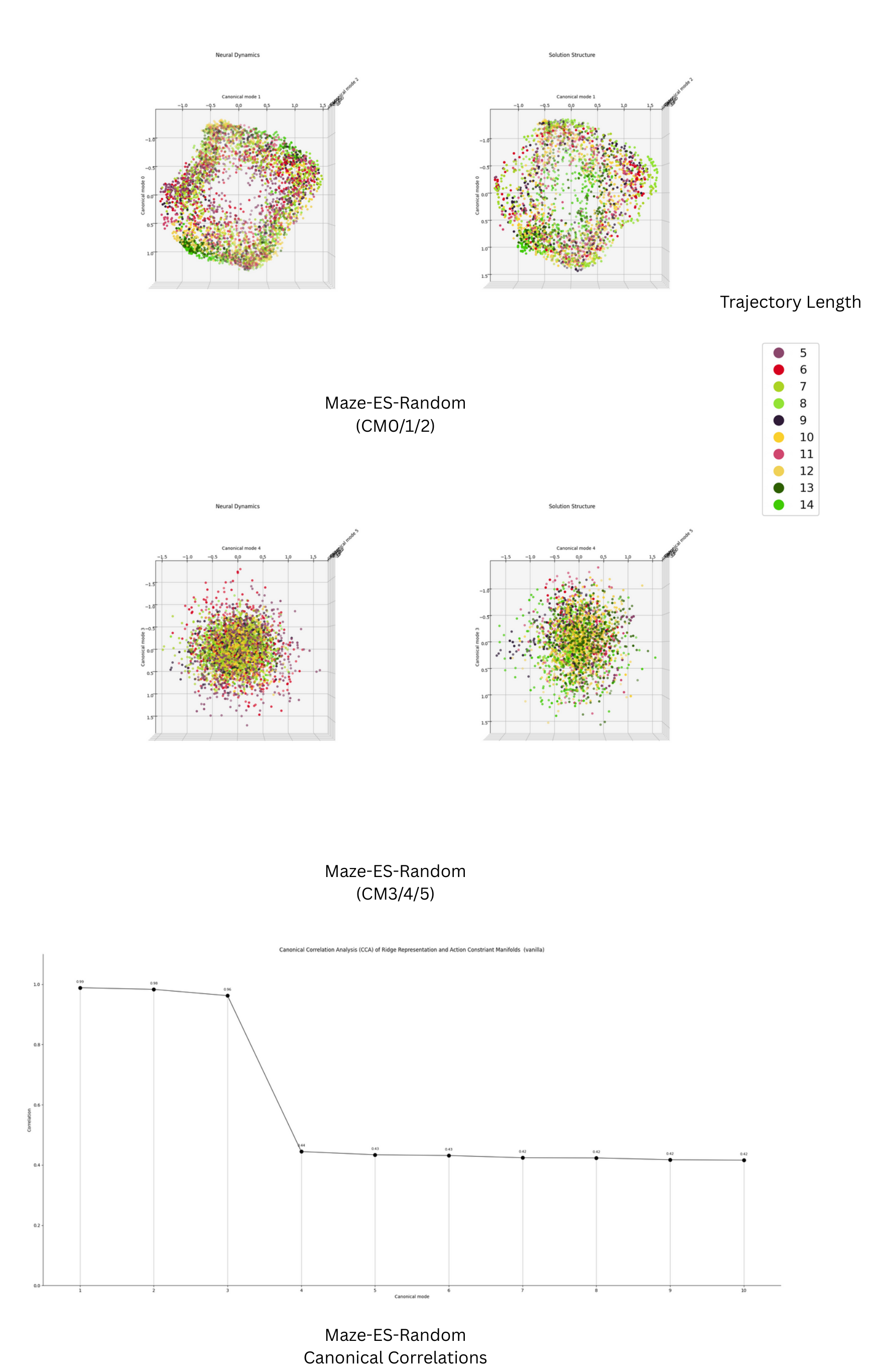}
    \caption{\textbf{Control Result: Action-Constrained Random Network.} 
    Analysis of \textbf{10,000} ``Pseudo-Neural Limit Cycles'' generated from a random RNN constrained to output correct actions. 
    \textbf{(Bottom Plot)} The canonical correlation spectrum drops precipitously after the 3rd dimension, indicating minimal structural depth beyond basic action coding. 
    \textbf{(Scatter Plots)} Projections of the first 6 Canonical Modes (CM0--CM5). Even though the network outputs the correct actions, the neural geometry (visualized with a random distinct color map) appears as a disordered distribution with no topological similarity to the behavioral structure. This confirms that action-matching alone is insufficient to produce the high-dimensional isometry observed in trained agents.}
    \label{fig:control_comparison_random}
\end{figure*}

\begin{figure*}[htbp]
    \centering
    \includegraphics[width=\textwidth]{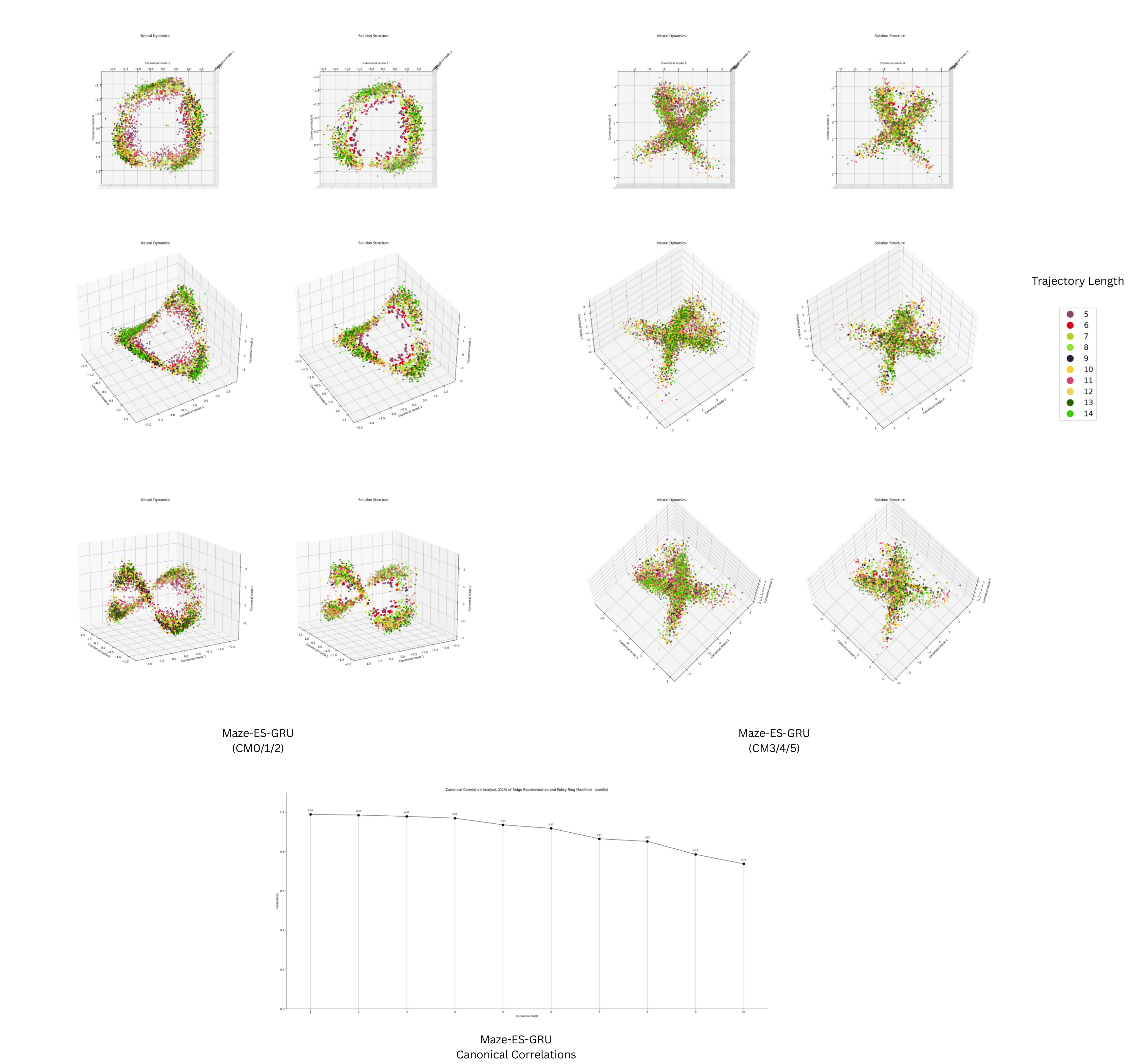}
    \caption{\textbf{Experimental Result: Fully Trained Maze-ES-GRU Agent.} 
    Analysis of \textbf{10,000} actual neural limit cycles sampled from the optimized agent.
    \textbf{(Bottom Plot)} High canonical correlations persist for more than 10 dimensions, suggesting a rich, high-dimensional shared geometry.
    \textbf{(Scatter Plots)} The neural projections (CM0--CM5) exhibit clear, coherent geometric patterns that mirror the behavioral manifold. Unlike the random control, the learned dynamics organize the state space into a structured coordinate system.}
    \label{fig:control_comparison_gru}
\end{figure*}

\clearpage

\section{Robustness of Behavioral Potential Field (BPF) Hyperparameters}
\label{app:bpf_robustness}

A critical question regarding the high canonical correlations reported in our main results is whether they are artifacts of specific hyperparameter choices in the Behavioral Potential Field (BPF) construction. Specifically, one might verify if the alignment relies on a carefully tuned \textit{effective radius} ($R_{\text{eff}}$) or a specific \textit{distance metric} to match the neural geometry.

To rigorously address this concern, we conducted a comprehensive robustness analysis using the Maze-ES-GRU setting. We systematically varied the BPF construction parameters across a wide grid of configurations:
\begin{itemize}
    \item \textbf{Effective Radius Scaling:} We tested four distinct scales for the effective radius relative to the baseline used in the main text: $0.5\times$ (sharp, local fields), $0.75\times$, $1.0\times$ (baseline), and $1.5\times$ (broad, diffuse fields).
    \item \textbf{Distance Metrics:} We evaluated three different norms for calculating the potential decay: $L_1$ (Manhattan distance), $L_2$ (Euclidean distance, baseline), and $L_3$ (Minkowski distance with $p=3$).
\end{itemize}

The results of this grid search are visualized in Figure~\ref{fig:bpf_robustness}. Across all 12 experimental configurations, the canonical correlation spectra exhibit consistent stability. The leading canonical correlations consistently remain high ($>0.85$) and decay slowly across the first 10 modes, regardless of whether the behavioral field is sharp or diffuse, or whether the decay follows an $L_1$ or $L_2$ profile. This invariance confirms that the observed high-dimensional isometry is robust to substantial variations in the specific engineering details of the behavioral embedding we use.

\begin{figure}[htbp]
    \centering
    \includegraphics[width=\textwidth]{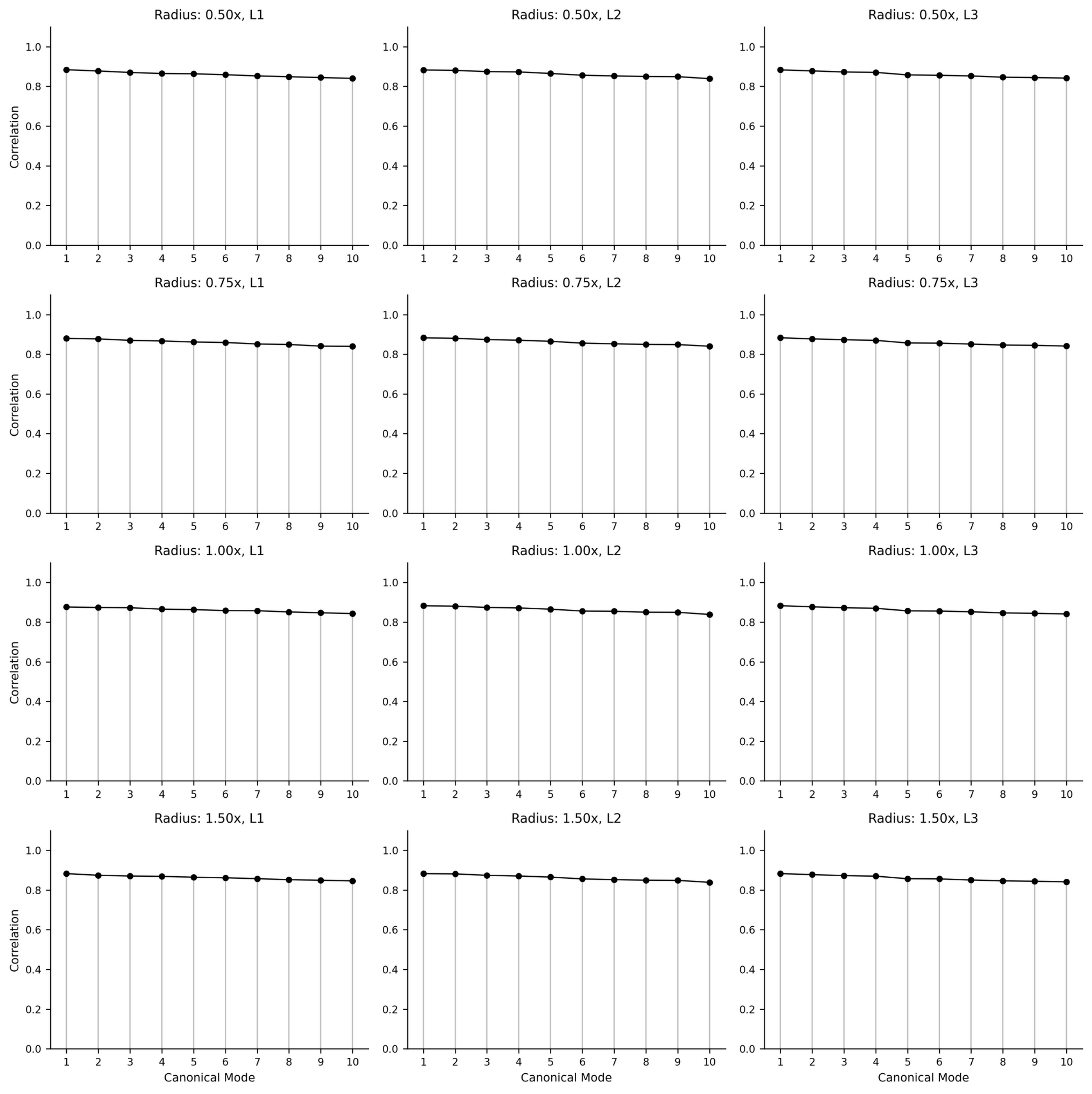}
    \caption{\textbf{Robustness of CCA alignment under BPF hyperparameter variations.}
    We performed a grid search over BPF construction parameters to verify the stability of the neural-behavioral alignment.
    The matrix displays the canonical correlation spectra (top 10 modes) for 12 distinct configurations.
    \textbf{Rows} vary the \textit{Effective Radius} ($R_{\text{eff}}$) from $0.5\times$ to $1.5\times$ the baseline.
    \textbf{Columns} vary the distance metric ($L_1$, $L_2$, $L_3$) used to generate the potential field.
    The consistently high correlations across all panels demonstrate that the structural alignment is robust and not an artifact of parameter fine-tuning.}
    \label{fig:bpf_robustness}
\end{figure}

\clearpage

\section{Counterfactual Verification of CCA-Aligned Representations}
\label{app:counterfactual}

Although Canonical Correlation Analysis (CCA) reveals a high-dimensional alignment between neural dynamics and behavioral geometry, this correlational evidence does not establish functional causality. The aligned dimensions could possibly be epiphenomenal, with the actual control signals residing in the uncorrelated subspaces. To evaluate the causal necessity and sufficiency of the discovered representations, we conducted a counterfactual injection experiment in the Maze-ES-GRU setting.

\subsection{Experimental Design}

Our strategy involves surgically manipulating the agent's internal memory state at the beginning of an episode and observing the impact on adaptation speed.

\textbf{Logic.} In a standard Meta-RL setting, an agent typically requires a "search phase" (exploration) of approximately 250 steps to locate the goal and stabilize its trajectory. However, if we initialize the agent with the hidden state $h^*$ extracted from a converged limit cycle (an "optimal strategy" state), the agent should theoretically bypass exploration and execute the optimal path immediately.

By transforming this optimal state $h^*$ into the CCA canonical space, we can selectively lesion specific dimensions before inverse-projecting them back into the neural space and injecting them into the agent. This allows us to test two competing hypotheses:
\begin{itemize}
    \item \textbf{Causal Necessity:} If the top-ranked CCA dimensions encode the essential navigation map, preserving them (while randomizing all others) should preserve the accelerated convergence.
    \item \textbf{Causal Insufficiency:} Conversely, if we selectively randomize \textit{only} the top CCA dimensions (while keeping all others intact), the agent should lose its "implanted memory" and revert to the slow baseline exploration.
\end{itemize}

\subsection{Intervention Procedure}

The procedure is formalized in Algorithm~\ref{alg:counterfactual}. We utilize the canonical transformation matrices $A$ derived from our main analysis. For a given converged neural state $h^*$, we project it to the latent code $z$. We then apply a masking or randomization operator $\mathcal{M}$ to specific dimensions of $z$—either targeting the top-$K$ correlated modes or the bottom uncorrelated modes. The modified latent code $\tilde{z}$ is mapped back to the RNN's native hidden space via the pseudo-inverse $A^\dagger$ to obtain the counterfactual state $\tilde{h}_{init}$.

\begin{algorithm}[h]
\caption{Counterfactual Injection Test (CCA-Based)}
\label{alg:counterfactual}
\begin{algorithmic}[1]
\STATE {\bfseries Input:} Pre-trained Policy $\pi_\theta$, CCA Projection Matrix $A$, Inverse Projection $A^\dagger$
\STATE {\bfseries Input:} Converged Limit Cycle State $h^*$, Cutoff Dimension $K$
\STATE {\bfseries Parameter:} Condition Type $\in \{\textsc{Keep-Top}, \textsc{Remove-Top}\}$

\STATE \textit{// 1. Project optimal state to Canonical Space}
\STATE $z_{cca} \leftarrow \text{Project}(h^*, A)$

\STATE \textit{// 2. Apply Causal Intervention}
\IF{Condition is \textsc{Keep-Top}}
    \STATE $\tilde{z}_{cca} \leftarrow \text{Concat}(z_{cca}[0:K], \text{RandomNoise}(dim-K))$
    \STATE \textit{\{Preserve high-correlation semantics, destroy residuals\}}
\ELSE
    \STATE $\tilde{z}_{cca} \leftarrow \text{Concat}(\text{RandomNoise}(K), z_{cca}[K:])$
    \STATE \textit{\{Destroy high-correlation semantics, preserve residuals\}}
\ENDIF

\STATE \textit{// 3. Inverse Project to Neural Space}
\STATE $\tilde{h}_{init} \leftarrow \text{InverseProject}(\tilde{z}_{cca}, A^\dagger)$

\STATE \textit{// 4. Run Evaluation Episode}
\STATE Initialize Environment $E$
\STATE Set Agent Memory $h_0 \leftarrow \tilde{h}_{init}$
\STATE Run episode and record \textbf{Convergence Time} $T_{conv}$

\STATE {\bfseries Return} $T_{conv}$
\end{algorithmic}
\end{algorithm}

\subsection{Results}

Figure~\ref{fig:counterfactual_histograms} presents the histograms of convergence times under different intervention conditions.

\begin{itemize}
    \item \textbf{Baseline (Fig.~\ref{fig:counterfactual_histograms}a):} In standard episodes (cold start), the distribution of convergence times peaks around \textbf{250 steps}, representing the typical exploration cost.
    \item \textbf{Oracle Injection (Fig.~\ref{fig:counterfactual_histograms}b):} Directly injecting the unaltered optimal state $h^*$ shifts the peak dramatically to \textbf{50 steps}, confirming that $h^*$ encodes the solution.
    \item \textbf{Causal Validation (Fig.~\ref{fig:counterfactual_histograms}c):} Crucially, when we retain only the top CCA dimensions and \textbf{randomize all remaining dimensions}, the convergence peak remains at \textbf{50 steps}. This proves that the navigation-critical information is almost entirely compressed within the high-CCA subspace; the uncorrelated dimensions are causally irrelevant for this task.
    \item \textbf{Negative Control (Fig.~\ref{fig:counterfactual_histograms}d):} Conversely, when we randomize the top CCA dimensions while preserving the rest of the network state intact, the performance collapses back to the baseline (peak $\approx$ 250 steps). The agent effectively suffers "amnesia," confirming that the high-CCA dimensions are not just correlated, but are the \textbf{key drivers} of the learned behavior.
\end{itemize}

\begin{figure*}[t]
    \centering
    \includegraphics[width=\textwidth]{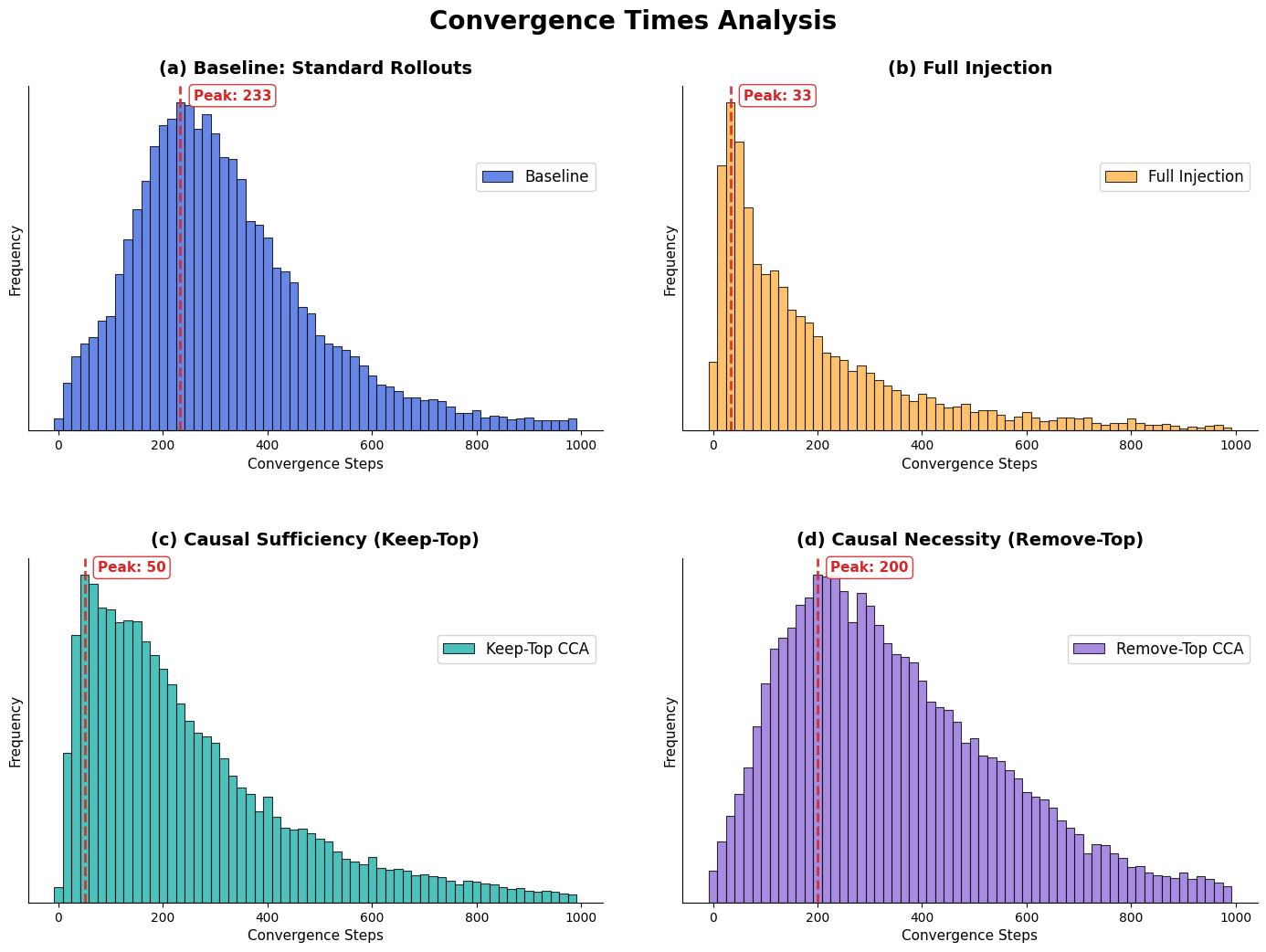}
    \caption{\textbf{Histograms of convergence times under counterfactual neural state injection.}
    \textbf{(a)} \textbf{Baseline:} Standard rollouts start with zero memory; agents typically require $\sim$250 steps (exploration phase) to stabilize on the optimal path.
    \textbf{(b)} \textbf{Full Injection:} Injecting the exact hidden state from a converged limit cycle allows the agent to execute the optimal strategy immediately (peak $\sim$50 steps).
    \textbf{(c)} \textbf{Causal Sufficiency (Keep-Top):} Even if all neural dimensions \textit{except} the top CCA modes are randomized, the agent retains its immediate performance (peak $\sim$50 steps). This underscores that the CCA-aligned manifold captures the sufficient statistics for control.
    \textbf{(d)} \textbf{Causal Necessity (Remove-Top):} Randomizing only the top CCA dimensions—while keeping all other neural components intact—reverts the agent to baseline performance (peak $\sim$250 steps), confirming that the injected high-correlation dimensions are the necessary drivers of navigation success.}
    \label{fig:counterfactual_histograms}
\end{figure*}

\end{document}
